\title{Stability Verification in Stochastic Control Systems via Neural Network Supermartingales}
\title{My Publication Title --- Single Author}
\author {
    Author Name
}
\author {
    % Authors
    Mathias Lechner\thanks{Equal Contribution.},
    \DJ{}or\dj{}e \v{Z}ikeli\'c\footnotemark[1],
    Krishnendu Chatterjee,
    Thomas A. Henzinger \\\
}
\newtheorem{definition}{Definition}
\newtheorem{theorem}{Theorem}
\newtheorem{proposition}{Proposition}
\newenvironment{manualtheorem}[1]{%
	\manualtheoreminner
}{\endmanualtheoreminner}
\newcommand{\eps}{\epsilon}
\newcommand{\support}{\mathsf{support}}
\newcommand{\Stable}{\mathcal{X}_s}
\begin{document}

\maketitle

\begin{abstract}
We consider the problem of formally verifying almost-sure (a.s.) asymptotic stability in discrete-time nonlinear stochastic control systems. While verifying stability in deterministic control systems is extensively studied in the literature, verifying stability in stochastic control systems is an open problem. The few existing works on this topic either consider only specialized forms of stochasticity or make restrictive assumptions on the system, rendering them inapplicable to learning algorithms with neural network policies. 
In this work, we present an approach for general nonlinear stochastic control problems with two novel aspects: (a)~instead of classical stochastic extensions of Lyapunov functions, we use ranking supermartingales (RSMs) to certify a.s.~asymptotic stability, and (b)~we present a method for learning neural network RSMs. 
We prove that our approach guarantees a.s.~asymptotic stability of the system and
provides the first method to obtain bounds on the stabilization time, which stochastic Lyapunov functions do not.
Finally, we validate our approach experimentally on a set of nonlinear stochastic reinforcement learning environments with neural network policies.
\end{abstract}

\section{Introduction}

Reinforcement learning (RL) presents a promising approach to learning high-performing control policies in nonlinear control problems. However, most RL algorithms focus on learning a policy that maximizes the expected reward~\cite{sutton2018reinforcement}, and do not take safety constraints into account. This raises concerns about their suitability for safety-critical applications such as autonomous vehicles or healthcare. Thus, a fundamental challenge for the deployment of policies learned via RL algorithms in safety-critical applications is certifying their safety~\cite{AmodeiOSCSM16}.

Stability is one of the most important properties that a control policy needs to ensure for the system to be safe~\cite{lyapunov1992general}. In their training phase, RL algorithms explore unknown environments through randomized actions while optimizing the learned policy's expected reward. Without a control mechanism to ensure that the system safely recovers from such exploratory actions and goes back to some known safe region, this might lead to catastrophic events. For example~\cite{BerkenkampTS017}, if a self-driving car ends up driving outside its lane, a safe control policy needs to be able to stabilize the car back within the lane. Stability analysis is concerned with providing formal guarantees that the system can, with probability $1$, recover back to this safe region from any system state and stay there indefinitely.

Formal verification of stability in {\em deterministic} control problems is well-studied. In particular, Lyapunov functions are an established method for stability analysis of deterministic systems~\cite{khalil2002nonlinear}. A more recent line of research focuses on automatically learning a Lyapunov function in the form of a neural network (see Related Work).
While there are several theoretical results on extending Lyapunov functions to stochastic systems~\cite{kushner1965stability,Kushner14}, only a few works consider the problem of automated stability analysis for control problems where the stochasticity originates from the environment~\cite{CrespoS03,Vaidya15}. Moreover, these works rely on restrictive assumptions on the system, making them inapplicable to learning algorithms with neural network policies, and they only verify strictly weaker notions of stability. Since uncertainty is a crucial component of RL systems, through exploration and to bridge the simulation-to-real gap \cite{tobin2017domain,james2017transferring}, methods for stability verification in stochastic control problems are needed. These methods need to support neural network policies and truly certify stability.

In this work, we present a method for formally verifying stability in discrete-time stochastic control problems. Our method is based on {\em ranking supermartingales (RSMs)}, which were originally introduced in the programming languages literature for termination analysis of probabilistic programs~\cite{ChakarovS13}. Intuitively, RSMs are %functions that (1)~are nonnegative, and (2)~
nonnegative functions that decrease in expectation by at least $\eps>0$ after every one-step evolution of the system and in each state that is not in the target region. We prove, for the first time, that RSMs can also be used to define stability certificates for stochastic control problems.

There are two key advantages of using RSMs instead of existing stochastic extensions of Lyapunov functions. First, we show that the defining properties of RSMs are much easier to encode within a learning framework. Second, we show that RSMs provide the first method to obtain bounds on the {\em stabilization time}, which stochastic Lyapunov functions do not. Ensuring that stabilization happens within some tolerable time limit is another practical concern about system safety. For instance, given a stabilizing policy for a self-driving car that drives at a very high speed, it is not sufficient to {\em only} ensure that the speed eventually stabilizes within the allowed speed limit. A good stabilizing policy in such scenarios {\em additionally} needs to provide plausible guarantees on the stabilization time. One of the key benefits of using RSMs is that they provide such guarantees.

We then proceed to presenting an algorithmic framework for learning RSMs in the form of neural networks. Our algorithm draws insight from existing methods for learning Lyapunov functions in deterministic control problems~\cite{ChangRG19}, and consists of two modules: the {\em learner} which learns an RSM candidate in the form of a neural network, and the {\em verifier} which then verifies the learned candidate. Whenever the verification step fails, a set of counterexamples showing that the candidate is not an RSM is computed, which are then used by the learner to fine-tune the candidate. This loop is repeated until a learned RSM candidate is successfully verified.

One of the key algorithmic challenges in designing the verifier module, compared to the case of deterministic systems, is that we need to verify an expected decrease condition which requires being able to compute the {\em expected value} of a neural network function over a probability distribution. Note, sampling cannot be used for this task since it only allows computing statistical bounds.
To solve this challenge, we propose a method for efficiently computing formal and tight bounds on the expected value of an arbitrary neural network function over a probability distribution. We also demonstrate experimentally that our method computes tight bounds in practice.
Our algorithmic contribution on computing expected value bounds for neural networks might on its own open various research directions on analyzing neural networks in probabilistic settings.

Finally, we evaluate our approach on two stochastic RL tasks with neural network control policies. It successfully learns RSMs proving that the policies stabilize the systems.  

\paragraph{Contributions} Our contributions are as follows:
\begin{enumerate}
    \item We show that ranking supermartingales (RSMs) provide a stability certificate for stochastic control problems, as well as guarantees on the stabilization time.
    
    \item We present a framework for learning neural network RSMs which also formally verifies the learned RSM.
    
    \item As a part of our verification framework, we present a method for efficiently computing formal bounds on the expected value of a neural network function over a probability distribution. We are not aware of any existing works that tackle this problem.
    
    \item We empirically validate that our approach can prove stability of stochastic systems with neural network policies.
\end{enumerate}

\section{Related Work}

\paragraph{Stability verification via Lyapunov functions} Stability verification in {\em deterministic} dynamical systems has received a lot of attention in recent works. For systems with polynomial dynamics and Lyapunov functions restricted to the sum-of-squares (SOS) form, a Lyapunov function can be computed via semi-definite programming~\cite{henrion2005positive,parrilo2000structured,jarvis2003some}. A learner-verifier framework similar to ours but for computing polynomial Lyapunov functions has been proposed in~\cite{RavanbakhshS19}. However, these methods require polynomial approximations and may not be efficient for systems with general nonlinearities. Moreover, it is known that even some simple dynamical systems that are asymptotically stable do not admit polynomial Lyapunov functions~\cite{AhmadiKP11}.

Learning Lyapunov functions in the form of a neural network has been considered in~\cite{RichardsB018,ChangRG19,AbateAGP21}, and it is an approach that is better suited to dynamical systems with general nonlinearities. In particular, \cite{RichardsB018} learn a Lyapunov function together with a region in which the system is stable by first discretizing the state space of the system, then learning a Lyapunov function candidate which tries to maximize the number of the discrete states at which the Lyapunov condition holds, and finally verifying that the candidate is indeed a Lyapunov function. The works~\cite{ChangRG19,AbateAGP21} propose a learner-verifier framework which uses counterexamples found by the verifier to improve the loss function and thus learn a new candidate. This loop is repeated until the verifier certifies that the Lyapunov function is correct. Our method for stability verification combines and extends ideas from these works.

\paragraph{Stability for stochastic control problems} All of the above methods consider deterministic dynamical systems. While there are several theoretical results on the stability of stochastic dynamical systems (see~\cite{Kushner14} for a comprehensive survey), to our best knowledge there are very few works that consider their automated stability verification~\cite{Vaidya15,CrespoS03}. Both of these are numerical approaches that first partition the system's state space into finitely many regions and then over-approximate the system's continuous dynamics via a discrete finite-state abstraction. Thus, the computed stability certificates are piecewise-constant. Furthermore, \cite{Vaidya15} verifies a weaker notion of stability called ``coarse stochastic stability'' that depends on the partition of the state space, and \cite{CrespoS03} imposes stability by requiring the system to reach the stabilizing region within some pre-specified finite time and deterministically (i.e.~for each sample path).

%Since uncertainty in the form of probabilistic effects of actions is a crucial component of RL systems, methods that allow computing more expressive stability certificates, that truly certify a.s.~asymptotic stability and that certify it over the infinite time horizon are necessary in order to be able to guarantee stability in safety critical applications.

\paragraph{Reachability for deterministic control problems}
Reachability is a property that is naturally related to stability since stability requires reachability of the stabilization set. There are several approaches and tools that analyze reachability in {\em deterministic} continuous-time feedback loop systems controlled by neural network policies. Some notable examples are Sherlock~\cite{DuttaCS19} and ReachNN/ReachNN$^\ast$~\cite{HuangFLC019,FanHCL020} which use polynomial approximations to efficiently over-approximate the reachable set over some given time horizon, NNV~\cite{TranYLMNXBJ20} which is based on abstract interpretation, LRT-NG \cite{GruenbacherCLIS20} which overapproximates the reachable set as sequence of hyperspheres, or Verisig~\cite{IvanovWAPL19} which reduces the problem to reachability analysis in hybrid systems. Furthermore, GoTube \cite{gruenbacher2021gotube} constructs the reachable set of a deterministic continuous-time system with statistical guarantees about the constructed set overapproximating the true reachable states.

Note, however, that the goal of reachability analysis is to compute a set of states that are visited by {\em some} trajecotry of the system. In contrast, the goal of stability analysis is to show that {\em all} trajectories stabilize within the stabilization set (or with probability $1$ in the case of stochastic systems). Furthermore, the above tools consider reachability over some finite time horizon and in deterministic systems, whereas in this work we do not impose any time limit and consider stochastic systems. Thus, these tools are not applicable to the stability verification problem in stochastic control systems.

\paragraph{Safe exploration} RL algorithms need to explore the environment via randomized actions to learn which actions lead to a high future reward. However, in safety-critical environments, random actions may lead to catastrophic results. Safe exploration RL aims to restrict the exploratory actions to those that ensure safety of the environment. The most dominant approach to addressing this problem is learning the system dynamics' uncertainty bounds and limiting the exploratory actions within a high probability safety region. In the literature, Gaussian Processes \cite{Koller2018LearningBasedMP, Turchetta2019SafeEF, Berkenkamp2019SafeEI}, linearized models \cite{Dalal2018SafeEI} , deep robust regression \cite{Liu2020RobustRF}, and Bayesian neural networks \cite{lechner2021infinite} are used for learning the uncertainty bounds.

\paragraph{Learning stable dynamics} Learning dynamics from observation data is the first step in many control methods as well as model-based RL. Recent works considered learning deterministic system dynamics with guarantees on stability of some specified region~\cite{KolterM19}. Learning stochastic dynamics from observation data has been studied in~\cite{UmlauftH17,LawrenceLFBG20}. 

\paragraph{RSMs for probabilistic programs} Ranking supermartingales (RSMs) were first introduced in the programming languages community in order to reason about termination of probabilistic programs (PPs)~\cite{ChakarovS13}. They are a stochastic extension of the classical notion of ranking functions in programs~\cite{Floyd1967}, and in addition to ensuring probability $1$ termination they also provide guarantees on termination time~\cite{FioritiH15,ChatterjeeFNH16}. Our theoretical guarantees on the stabilization time draw insight from these results.

While some of our theoretical results are motivated by the works on PPs, our approach to stability verification differs significantly from the existing methods for RSM computation in PPs~\cite{ChakarovS13,ChatterjeeFNH16,ChatterjeeFG16}. In particular, these methods compute linear/polynomial RSMs via linear/semi-definite programming, and are more similar to the early methods for the computation of  polynomial Lyapunov functions that we discussed above. On the contrary, our method learns an RSM in the form of a neural network. The only method for learning RSMs in PPs has been presented in the recent work of~\cite{AbateGR20}, but this work computes only neural network RSMs with a single hidden layer and for a restricted class of PPs. In contrast, one of the main algorithmic novelties of our work is that we propose a general framework for computing the expected value of a neural network function over some probability distribution, which allows us to learn multi-layer neural network RSMs for general nonlinear systems.

\section{Preliminaries}

We consider a discrete-time stochastic dynamical system
\[ \mathbf{x}_{t+1} = f(\mathbf{x}_t, \mathbf{u}_t, \mathbf{\omega}_t), \, t\in\mathbb{N}_0. \]
The dynamics of the system are defined by the dynamics function $f:\mathcal{X}\times\mathcal{U} \times \mathcal{N}\rightarrow \mathcal{X}$, where $\mathcal{X}\subseteq\mathbb{R}^m$ is the state space, $\mathcal{U}\subseteq\mathbb{R}^n$ is the control action space and $\mathcal{N}\subseteq \mathbb{R}^p$ is the disturbance space. The system starts in some initial state $\mathbf{x}_0\in \mathcal{X}$ and at each time step $t$, given a state $\mathbf{x}_t$, the action $\mathbf{u}_t = \pi(\mathbf{x}_t)$ is chosen according to a control policy $\pi:\mathcal{X}\rightarrow\mathcal{U}$. The action $\mathbf{u}_t$, the state $\mathbf{x}_t$ and a randomly sampled disturbance vector $\omega_t\sim d$ then give rise to the subsequent state $\mathbf{x}_{t+1}=f(\mathbf{x}_t,\mathbf{u}_t,\omega_t)$. Here, we use $d$ to denote the probability distribution over $\mathcal{N}$ from which the disturbance vector is sampled. Thus, the dynamics function $f$, the policy $\pi$ and the probability distribution $d$ together form a stochastic feedback loop system (or a closed-loop system).

\paragraph{Model assumptions} Stability analysis of stochastic dynamical systems would be impossible without additional assumptions on the system, so that the model is sufficiently well-behaved. To that end, we assume that $\mathcal{X}$, $\mathcal{U}$ and $\mathcal{N}$ are all Borel-measurable for the system semantics to be well-defined, and that $\mathcal{X}$ is compact in the Euclidean topology of $\mathbb{R}^m$. The dynamics function $f$ and the control policy $\pi$ are assumed to be Lipschitz continuous, which is a common assumption in control theory and allows a rich class of control policies including various types of neural networks~\cite{SzegedyZSBEGF13}. Moreover, assuming Lipschitz continuity is standard in existing works on stability analysis~\cite{RichardsB018,ChangRG19}. Finally, we assume that $d$ is a product of independent univariate distributions, which is needed for efficient sampling and expected value computation.

\paragraph{Probability space of trajectories} A sequence of state-action-disturbance triples $(\mathbf{x}_t,\mathbf{u}_t,\omega_t)_{t\in\mathbb{N}_0}$ is said to be a trajectory of the system, if for each $t\in\mathbb{N}_0$ we have $\mathbf{u}_t=\pi(\mathbf{x}_t)$, $\omega_t\in\support(d)$ and $\mathbf{x}_{t+1}=f(\mathbf{x}_t,\mathbf{u}_t,\omega_t)$. For each initial state $\mathbf{x}_0\in\mathcal{X}$, the system dynamics induces a Markov process which gives rise to the probability space over the set of all trajectories that start in $\mathbf{x}_0$~\cite[Section 2]{Puterman94}. We use $\mathbb{P}_{\mathbf{x}_0}$ and $\mathbb{E}_{\mathbf{x}_0}$ to denote the probability measure and the expectation operator in this probability space.
%We use $\mathbb{P}^{f,\pi,d}_{\mathbf{x}_0}$ and $\mathbb{E}^{f,\pi,d}_{\mathbf{x}_0}$ to denote the probability measure and the expectation operator in this probability space. When $f$, $\pi$ and $d$ are clear from the context, we abbreviate the notation to $\mathbb{P}_{\mathbf{x}_0}$ and $\mathbb{E}_{\mathbf{x}_0}$.

\paragraph{Almost-sure (a.s.) asymptotic stability} There are several notions of stochastic stability, so we formally define the one that we consider in this work~\cite{kushner1965stability}. Consider a stochastic feedback loop system defined as above, and let $\Stable\subseteq\mathcal{X}$ be Borel-measurable. We say that $\Stable$ is {\em closed under system dynamics} if, for every $\mathbf{x}\in\Stable$ and $\omega\in\support(d)$, we have that $f(\mathbf{x},\pi(\mathbf{x}),\omega)\in\Stable$.

For $\Stable\subseteq\mathcal{X}$ that is closed under system dynamics, we say that it is almost-surely asymptotically stable if from any initial state the system almost-surely converges to $\Stable$ (and therefore stays in $\Stable$ due to closedness under system dynamics).
%, intuitively, (1)~from any initial state the system almost-surely converges to $\Stable$, and (2)~for any initial state sufficiently close to $\Stable$ the probability of the system diverging far away from $\Stable$ is low.
In order to define this formally, for each $\mathbf{x}\in\mathcal{X}$ let $d(\mathbf{x},\Stable)=\inf_{\mathbf{x_s}\in\Stable}||\mathbf{x}-\mathbf{x}_s||_1$, where $||\cdot||_1$ is the $l_1$-norm on the Euclidean space $\mathbb{R}^m$.

\begin{definition}\label{def:stability}
A non-empty set of states $\Stable\subseteq\mathcal{X}$ that is closed under system dynamics is said to be {\em almost-surely (a.s.) asymptotically stable} if, for each $\mathbf{x}_0\in\mathcal{X}$, we have
\[ \mathbb{P}_{\mathbf{x}_0}\Big[ \lim_{t\rightarrow\infty}d(\mathbf{x}_t,\Stable) = 0 \Big] = 1. \]
%\begin{enumerate}
%    \item For each $\mathbf{x}_0\in\mathcal{X}$, we have
%    \[ \mathbb{P}_{\mathbf{x}_0}\Big[ \lim_{t\rightarrow\infty}d(\mathbf{x}_t,\Stable) = 0 \Big] = 1. \]
%    \item For each $\eps>0$ and $p>0$ there exists $\delta(\eps,p)>0$ such that, for any $\mathbf{x}_0\in\mathcal{X}$ with $d(\mathbf{x}_0,\Stable)<\delta(\eps,p)$, we have
%    \[ \mathbb{P}_{\mathbf{x}_0}\Big[ \sup_{t\in\mathbb{N}_0} d(\mathbf{x}_t,\Stable) > \eps \Big] < p. \]
%\end{enumerate}
\end{definition}

Our definition slightly differs from that of~\cite{kushner1965stability} which considers the special case of the stabilization set being the singleton equilibrium point at the origin, i.e.~$\Stable=\{\mathbf{0}\}$. The reason for this discrepancy is that many practical approaches to the stability analysis of nonlinear systems need to make additional assumptions on the system's behavior around the origin, as otherwise they would suffer from numerical error issues. For instance, \cite{BerkenkampTS017,RichardsB018} study stability of deterministic dynamical systems and both assume that some open neighbourhood of the origin is {\em a priori} known to be stable, whereas~\cite{ChangRG19} only check stability conditions away from some neighbourhood around the origin. In order to avoid making such assumptions and to ensure that our method truly certifies stability, we assume that the region $\mathcal{X}_s$ has {\em non-empty interior} (i.e.~it contains an open ball around a point in $\mathcal{X}$). By making either of the assumptions from the aforementioned works, our method naturally extends to the case where $\mathcal{X}_s=\{\mathbf{0}\}$.

\paragraph{Relation to a.s.~reachability verification} We remark that our method can also formally verify a.s.~reachability of a specified target set, i.e.~that for any initial state the system reaches a state in the target set with probability $1$. In fact, due to the assumption that the stabilization set $\Stable$ is closed under system dynamics, the problem of verifying a.s. asymptotic stability reduces to the a.s. reachability verification problem for the stabilization set.

Assuming the closedness under system dynamics of the stabilization set is a reasonable and a realistic choice, due to dynamical systems typically expressing weak dynamics around the systems' stable points.
As discussed above, many works on stability of deterministic dynamical systems also make a similar assumption, i.e. that an open
neighbourhood of the origin $\mathbf{0}$ is closed under system dynamics~\cite{BerkenkampTS017,RichardsB018,ChangRG19}.

%Indeed, in order to stabilize within a specified stabilization set an agent may, upon reaching the stabilization set, switch to another
%policy whose goal is solely to keep the agent within the stable set. Finding such a policy is arguably an easier task. 

\section{Theoretical Results}

We now present a theoretical framework for formally certifying stability of a region in a discrete-time stochastic dynamical system. Our framework is based on {\em ranking supermartingales} which we introduce below.

%We now present a theoretical framework for formally certifying stability of a region in a discrete-time stochastic dynamical system. Our framework is based on showing that the system admits a probabilistic variant of a Lyapunov function with respect to the stabilizing region. In a deterministic discrete-time dynamical system $\mathbf{x}_{t+1}=f(\mathbf{x}_t,\pi(\mathbf{x}_t))$ with $f$ a continuous dynamics function, $\pi$ a continuous control policy and $\Stable\subseteq\mathcal{X}$ that is closed under system dynamics, a Lyapunov function for $\Stable$ is a continuous function $V:\mathcal{X}\rightarrow\mathbb{R}$ such that $V(\mathbf{x})-V(f(\mathbf{x},\pi(\mathbf{x})))>0$ for every $\mathbf{x}\in\mathcal{X}\backslash\Stable$, $V(\mathbf{x})>0$ for every $\mathbf{x}\in\mathcal{X}\backslash\Stable$ and $V(\mathbf{x})=0$ for every $\mathbf{x}\in\Stable$.
%It is well known that Lyapunov functions present a prominent approach to stability analysis of deterministic dynamical systems~\cite{la2012stability}, and their stochastic extensions are known to be effective for proving stability of stochastic dynamical systems~\cite{Kushner14}. In the rest of this section, we first define the variant of stochastic Lyapunov functions that we use in our approach to certify stability, and then present theoretical results that will imply correctness of our novel approach to stability verification.

\paragraph{Ranking supermartingales} Consider a discrete-time stochastic dynamical system defined by a dynamics function $f$, a policy $\pi$ and a probability distribution $d$ with model assumptions as in the previous section, and let $\Stable\subseteq\mathcal{X}$ be closed under system dynamics and have non-empty interior.

Intuitively, a ranking supermartingale (RSM) is a nonnegative continuous function whose value at each state in $\mathcal{X}\backslash \Stable$ decreases in expectation by at least $\eps>0$ (is {\em $\eps$-ranked}) after a one-step evolution of the system under the policy $\pi$, where the expected value is taken with respect to the probability distribution $d$ over disturbance vectors. The name comes from the connection to supermartingales, a class of discrete-time stochastic processes in probability theory whose value decreases in expectation after each time step~\cite{Williams91}. RSMs were first introduced in~\cite{ChakarovS13} for the termination analysis of probabilistic programs, and we adapt them to the setting of stochastic dynamical systems.

\begin{definition}\label{def:stochasticlyapunov}
A continuous function $V: \mathcal{X} \rightarrow \mathbb{R}$ is said to be a {\em ranking supermartingale (RSM) for $\Stable$}, if $V(\mathbf{x})\geq 0$ holds for any $\mathbf{x}\in\mathcal{X}$ and if there exists $\eps > 0$ such that
\begin{equation}\label{eq:expdec}
    \mathbb{E}_{\omega\sim d}\Big[ V \Big( f(\mathbf{x}, \pi(\mathbf{x}), \omega) \Big) \Big] \leq V(\mathbf{x}) - \eps
\end{equation}
holds for every $\mathbf{x}\in \mathcal{X}\backslash \Stable$.
\end{definition}

We note that RSMs differ from the commonly considered stochastic Lyapunov functions for discrete-time stochastic systems~\cite{Kushner14}, which require $V$ to be continuous and to satisfy the following conditions:
\begin{itemize}
    \item $\mathbb{E}_{\omega\sim d}[ V( f(\mathbf{x}, \pi(\mathbf{x}), \omega) )] < V(\mathbf{x})$ for $\mathbf{x}\in \mathcal{X}\backslash \Stable$,
    \item $V(\mathbf{x}) > 0$ for $\mathbf{x}\in \mathcal{X}\backslash \Stable$, and
    \item $V(\mathbf{x}) = 0$ for $\mathbf{x}\in \Stable$.
\end{itemize}
The third condition would be quite restrictive if we tried to learn $V$ in the form of a neural network (which will be the goal of our novel approach to stability verification in the next section). Thus, one of the key benefits of considering RSMs instead of stochastic Lyapunov functions is that we may replace the $V(\mathbf{x}) = 0$ for $\mathbf{x}\in \Stable$ condition by a slightly stricter expected decrease condition that requires the decrease by at least some $\eps>0$. Theorem~\ref{thm:soundness} establishes that RSMs are indeed sufficient to prove a.s.~asymptotic stability.

\begin{theorem}\label{thm:soundness}
Let $f:\mathcal{X}\times\mathcal{U} \times \mathcal{N}\rightarrow \mathcal{X}$ be a Lipschitz continuous dynamics function, $\pi:\mathcal{X}\rightarrow\mathcal{U}$ a Lipschitz continuous policy and $d$ a distribution over $\mathcal{N}$. Suppose that $\mathcal{X}$ is compact and let $\Stable\subseteq\mathcal{X}$ be closed under system dynamics and have a non-empty interior. Suppose that there exists an RSM $V:\mathcal{X}\rightarrow\mathbb{R}$ for $\Stable$. Then $\Stable$ is a.s.~asymptotically stable.
\end{theorem}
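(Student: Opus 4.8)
The plan is to reduce a.s.~asymptotic stability to an a.s.~reachability statement and then exploit $V$ as a nonnegative supermartingale carrying a uniform downward drift. Fix an initial state $\mathbf{x}_0\in\mathcal{X}$, let $(\mathcal{F}_t)_{t\in\mathbb{N}_0}$ be the natural filtration of the trajectory process, and define the first hitting time $\tau=\inf\{t\in\mathbb{N}_0 : \mathbf{x}_t\in\Stable\}$, which is an $(\mathcal{F}_t)$-stopping time since $\Stable$ is Borel-measurable. Because $\Stable$ is closed under system dynamics, on the event $\{\tau<\infty\}$ we have $\mathbf{x}_t\in\Stable$ and hence $d(\mathbf{x}_t,\Stable)=0$ for every $t\geq\tau$, so $\lim_{t\to\infty}d(\mathbf{x}_t,\Stable)=0$ holds there. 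Thus it suffices to prove $\mathbb{P}_{\mathbf{x}_0}[\tau<\infty]=1$, which is precisely the a.s.~reachability reduction noted earlier in the paper.

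First I would translate the RSM condition into a supermartingale drift. Since $\omega_t\sim d$ is sampled independently of $\mathcal{F}_t$ and $\mathbf{x}_{t+1}=f(\mathbf{x}_t,\pi(\mathbf{x}_t),\omega_t)$, the Markov structure gives $\mathbb{E}_{\mathbf{x}_0}[V(\mathbf{x}_{t+1})\mid\mathcal{F}_t]=\mathbb{E}_{\omega\sim d}[V(f(\mathbf{x}_t,\pi(\mathbf{x}_t),\omega))]$, and Definition~\ref{def:stochasticlyapunov} bounds this by $V(\mathbf{x}_t)-\eps$ whenever $\mathbf{x}_t\notin\Stable$. I would then consider the stopped, drift-corrected process
\[ M_t := V(\mathbf{x}_{t\wedge\tau}) + \eps\,(t\wedge\tau). \]
For $t<\tau$ the $\eps$-ranking exactly cancels the added drift, so that $\mathbb{E}_{\mathbf{x}_0}[M_{t+1}\mid\mathcal{F}_t]\leq M_t$, while for $t\geq\tau$ the process is frozen and equality holds; hence $(M_t)$ is a nonnegative supermartingale. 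Note that $V$ is continuous on the compact set $\mathcal{X}$ and therefore bounded, which supplies the integrability needed for these conditional-expectation manipulations and ensures $V(\mathbf{x}_0)<\infty$.

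Finally I would extract the reachability conclusion and the time bound simultaneously. The supermartingale property yields $\eps\,\mathbb{E}_{\mathbf{x}_0}[t\wedge\tau]\leq\mathbb{E}_{\mathbf{x}_0}[M_t]\leq\mathbb{E}_{\mathbf{x}_0}[M_0]=V(\mathbf{x}_0)$, using $V\geq 0$. Letting $t\to\infty$ and applying the monotone convergence theorem to $t\wedge\tau\uparrow\tau$ gives $\mathbb{E}_{\mathbf{x}_0}[\tau]\leq V(\mathbf{x}_0)/\eps\leq (\sup_{\mathbf{x}\in\mathcal{X}}V(\mathbf{x}))/\eps<\infty$. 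A finite expectation forces $\tau<\infty$ almost surely, which by the reduction above proves the theorem; the same inequality simultaneously furnishes the uniform bound on the expected stabilization time advertised in the introduction.

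I expect the main obstacle to be the measure-theoretic bookkeeping around the stopping time rather than any single hard estimate: one must verify the supermartingale inequality uniformly across the boundary between $\{t<\tau\}$ and $\{t\geq\tau\}$, confirm that independence of the disturbances legitimizes the conditional-expectation identity, and justify the interchange of limit and expectation. Conceptually, the crucial feature is the strict $\eps$-decrease: it is exactly this uniform ranking that upgrades the bare nonnegative-supermartingale convergence of $V(\mathbf{x}_t)$, which by itself would not locate the limit inside $\Stable$, into a genuine hitting-time guarantee.
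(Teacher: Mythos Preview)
Your proof is correct and follows essentially the same route as the paper: reduce to a.s.\ reachability via closedness of $\Stable$, build from $V$ a stopped supermartingale with $\eps$-drift, and extract $\mathbb{E}_{\mathbf{x}_0}[\tau]\leq V(\mathbf{x}_0)/\eps<\infty$. The only cosmetic difference is packaging: the paper first shows the stopped process $X_i=V(\mathbf{x}_{i\wedge T_{\Stable}})$ is an abstract $\eps$-RSM and then invokes a standalone proposition giving $\mathbb{P}[T<\infty]=1$ and $\mathbb{E}[T]\leq\mathbb{E}[X_0]/\eps$, whereas you work directly with the drift-compensated process $M_t=V(\mathbf{x}_{t\wedge\tau})+\eps(t\wedge\tau)$ (which the paper introduces separately, as $Y_i$, only for the Azuma bound in Theorem~2).
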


The main idea behind the proof of Theorem~\ref{thm:soundness} is as follows. For each state $\mathbf{x}_0\in\mathcal{X}$, we consider the probability space of all trajectories that start in $\mathbf{x}_0$. We then show that the RSM $V$ for $\Stable$ gives rise to an instance of the mathematical notion of RSMs in this probability space, and use results from probability theory on the convergence of RSMs to conclude that $\Stable$ is a.s.~asymptotically stable. The overview of the results from probability and martingale theory that we use in our proof as well as the formal proof of the theorem can be found in the Supplementary Material.

\paragraph{Bounds on the convergence time} While formally verifying that a control policy stabilizes the system with probability $1$ is very important for safety critical applications, another practical concern is to ensure that stabilization happens within some tolerable time limit. %For instance, given a stabilizing policy for a self-driving car that drives at a very high speed, it is not sufficient to {\em only} ensure that the speed eventually stabilizes within the allowed speed limit. A good stabilizing policy in such scenarios {\em additionally} needs to provide plausible guarantees on the stabilization time.

Another important caveat of using RSMs for stability analysis of stochastic systems is that they provide formal guarantees on the stabilization time. For a system trajectory $(\mathbf{x}_t,\mathbf{u}_t,\omega_t)_{t\in\mathbb{N}_0}$, we define its stabilization time $T_{\Stable} = \inf\{t\in\mathbb{N}_0 \mid \mathbf{x}_t\in\Stable \}$ to be the first hitting time of the region $\Stable$ (with $T_{\Stable}=\infty$ if trajectory never reaches $\Stable$). Given $c>0$, the system has {\em $c$-bounded differences} if the distance between any two consecutive system states with respect to the $l_1$-norm does not exceed $c$, i.e.~for any $\mathbf{x}\in\mathcal{X}$ and $\omega\in\support(d)$ we have $||f(\mathbf{x},\pi(\mathbf{x}),\omega) - \mathbf{x}||_1 \leq c$.

\begin{theorem}\label{thm:bounds}
Let $f:\mathcal{X}\times\mathcal{U} \times \mathcal{N}\rightarrow \mathcal{X}$ be a Lipschitz continuous dynamics function, $\pi:\mathcal{X}\rightarrow\mathcal{U}$ a Lipschitz continuous policy and $d$ a distribution over $\mathcal{N}$. Suppose that $\mathcal{X}$ is compact and let $\Stable\subseteq\mathcal{X}$ be closed under system dynamics and have a non-empty interior. Suppose that there exists an $\eps$-RSM $V:\mathcal{X}\rightarrow\mathbb{R}$ for $\Stable$. Then, for any initial state $\mathbf{x}_0\in\mathcal{X}$,
\begin{enumerate}
    \item $\mathbb{E}_{\mathbf{x}_0}[ T_{\Stable} ] \leq \frac{V(\mathbf{x}_0)}{\eps}$.
    \item $\mathbb{P}_{\mathbf{x}_0}[ T_{\Stable} \geq t ] \leq \frac{V(\mathbf{x}_0)}{\eps \cdot t}$, for any time $t\in\mathbb{N}$.
    \item If the system has {\em $c$-bounded differences} for $c>0$, then $\mathbb{P}_{\mathbf{x}_0}[ T_{\Stable} \geq t ] \leq A\cdot e^{-t\cdot \eps^2/(2\cdot (c+\eps)^2)}$ for any time $t\in\mathbb{N}$ and $A=e^{\eps\cdot V(\mathbf{x}_0)/(c+\eps)^2}$.
\end{enumerate}
\end{theorem}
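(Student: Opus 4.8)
The plan is to reduce all three bounds to a single object: the real-valued process obtained by evaluating the RSM along a stopped trajectory. Fix $\mathbf{x}_0$, work in the probability space of trajectories starting at $\mathbf{x}_0$ with its natural filtration $(\mathcal{F}_t)_t$, write $T=T_{\Stable}$, and set $Y_t = V(\mathbf{x}_{t\wedge T}) + \eps\cdot(t\wedge T)$. I would first verify that $(Y_t)_t$ is a nonnegative supermartingale. Nonnegativity is immediate from $V\geq 0$. For the supermartingale property, note that $\{t<T\}$ is $\mathcal{F}_t$-measurable and forces $\mathbf{x}_t\in\mathcal{X}\setminus\Stable$; since $\mathbf{x}_{t+1}=f(\mathbf{x}_t,\pi(\mathbf{x}_t),\omega_t)$ with $\omega_t\sim d$ independent of $\mathcal{F}_t$, the defining inequality~\eqref{eq:expdec} gives $\mathbb{E}[V(\mathbf{x}_{t+1})\mid\mathcal{F}_t]\leq V(\mathbf{x}_t)-\eps$ on this event, whence $\mathbb{E}[Y_{t+1}\mid\mathcal{F}_t]\leq Y_t$; on $\{t\geq T\}$ the process is frozen and equality holds. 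Integrability is free since $V$ is continuous on the compact $\mathcal{X}$, hence bounded.

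\emph{Parts 1 and 2.} From the supermartingale property, $\eps\cdot\mathbb{E}[t\wedge T]\leq \mathbb{E}[Y_t]\leq Y_0=V(\mathbf{x}_0)$, where I drop the term $V(\mathbf{x}_{t\wedge T})\geq 0$. Letting $t\to\infty$ and invoking monotone convergence (as $t\wedge T\uparrow T$) yields $\mathbb{E}[T]\leq V(\mathbf{x}_0)/\eps$, which is claim~1 (and en passant re-establishes $T<\infty$ a.s.). Claim~2 is then Markov's inequality applied to the nonnegative $T$: $\mathbb{P}[T\geq t]\leq \mathbb{E}[T]/t\leq V(\mathbf{x}_0)/(\eps\cdot t)$.

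\emph{Part 3.} Here I would upgrade the additive estimate to an exponential one via an exponential supermartingale. The $c$-bounded-differences hypothesis controls consecutive states, $\|\mathbf{x}_{t+1}-\mathbf{x}_t\|_1\leq c$; combined with Lipschitz continuity of $V$ this bounds the increment $|V(\mathbf{x}_{t+1})-V(\mathbf{x}_t)|$, so that on $\{t<T\}$ the increment $W:=Y_{t+1}-Y_t = (V(\mathbf{x}_{t+1})-V(\mathbf{x}_t))+\eps$ satisfies $\mathbb{E}[W\mid\mathcal{F}_t]\leq 0$ and $|W|\leq c+\eps$. Hoeffding's lemma applied conditionally on the interval $[-(c+\eps),c+\eps]$, together with the one-sided mean bound, then gives $\mathbb{E}[e^{\lambda W}\mid\mathcal{F}_t]\leq e^{\lambda^2(c+\eps)^2/2}$ for every $\lambda>0$, which is exactly what is needed to make $M_t:=\exp\bigl(\lambda Y_t-\tfrac12\lambda^2(c+\eps)^2(t\wedge T)\bigr)$ a supermartingale, so $\mathbb{E}[M_t]\leq M_0=e^{\lambda V(\mathbf{x}_0)}$. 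On $\{T\geq t\}$ we have $Y_t\geq \eps t$ and $t\wedge T=t$, hence $M_t\geq \exp(\lambda\eps t-\tfrac12\lambda^2(c+\eps)^2 t)$; comparing with $\mathbb{E}[M_t]$ through Markov gives $\mathbb{P}[T\geq t]\leq \exp\bigl(\lambda V(\mathbf{x}_0)-\lambda\eps t+\tfrac12\lambda^2(c+\eps)^2 t\bigr)$. Choosing $\lambda=\eps/(c+\eps)^2$ collapses the exponent to $\eps V(\mathbf{x}_0)/(c+\eps)^2 - t\eps^2/(2(c+\eps)^2) = \ln A - t\eps^2/(2(c+\eps)^2)$, which is claim~3.

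The main obstacle I anticipate is Part~3: obtaining the two-sided control on the supermartingale increment $W$ and translating the state-space bounded-difference hypothesis into a bound on the increments of $V$. This is where Lipschitz continuity of $V$ enters, and where the stated constant $c+\eps$ (rather than $L\cdot c+\eps$) implicitly requires $V$ to be $1$-Lipschitz in the $l_1$ norm, or an appropriate rescaling. The remaining delicate point is verifying the conditional moment bound $\mathbb{E}[e^{\lambda W}\mid\mathcal{F}_t]\leq e^{\lambda^2(c+\eps)^2/2}$ with the right constant, which relies on combining Hoeffding's lemma with the one-sided mean bound $\mathbb{E}[W\mid\mathcal{F}_t]\leq 0$, and checking that the specific (non-optimal) choice $\lambda=\eps/(c+\eps)^2$ reproduces the stated constants exactly.
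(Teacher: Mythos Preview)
Your argument is correct and follows essentially the same route as the paper. Both proofs hinge on the compensated stopped process $Y_t=V(\mathbf{x}_{t\wedge T})+\eps\,(t\wedge T)$ being a supermartingale; the paper uses it only for Part~3 (handling Parts~1--2 via a separate inductive estimate, Proposition~1, on the stopped process $X_i=V(\mathbf{x}_{i\wedge T})$), whereas you use $Y_t$ uniformly, which is slightly more economical. For Part~3 the paper invokes Azuma's inequality as a black box on $(Y_t)$, while you unpack Azuma via the exponential supermartingale $M_t=\exp(\lambda Y_t-\tfrac12\lambda^2(c+\eps)^2(t\wedge T))$ and Hoeffding's lemma; these are the same argument at different levels of abstraction, and your choice $\lambda=\eps/(c+\eps)^2$ reproduces the stated constants exactly.

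Your caveat about the increment bound is well taken and in fact sharper than the paper: the hypothesis ``$c$-bounded differences'' is stated for the \emph{state} increments $\|\mathbf{x}_{t+1}-\mathbf{x}_t\|_1\le c$, but what Azuma needs is $|Y_{t+1}-Y_t|\le c+\eps$, i.e.\ $|V(\mathbf{x}_{t+1})-V(\mathbf{x}_t)|\le c$. The paper simply asserts ``$(X_i)$ has $c$-bounded differences'' without comment; as you note, this implicitly requires $V$ to be $1$-Lipschitz in $\|\cdot\|_1$ (or one should read $c$ as a bound on the $V$-increments). This is a genuine imprecision in the theorem statement rather than a flaw in your proof.
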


The proof of Theorem~\ref{thm:bounds} can be found in the Supplementary Material and here we present the key ideas. The first part shows that the expected stabilization time is bounded from above by the initial value of $V$ divided by $\eps$. To prove it, we show that the stabilization time gives rise to a stopping time in the probability space of all trajectories that start in $\mathbf{x}_0$. We then observe that the RSM $V$ satisfies the expected decrease condition until $T_{\Stable}$ is exceeded and use the results from probability theory on the convergence of RSMs to conclude the bound on the expected value of this stopping time. 

The second part shows a bound on the probability that the stabilization time exceeds a threshold $t$, and it follows immediately from the first part by an application of Markov's inequality. Note that this bound decays linearly in $t$, as $t\rightarrow\infty$.

Finally, the third part shows an asymptotically tighter bound with the decay in $t$ being exponential, for systems that have $c$-bounded differences. The proof follows by an application of Azuma's inequality~\cite{azuma1967weighted} which is a classical result from martingale theory and which we also include in the Supplementary Material.

\section{Method for Stability Verification}

In this section, we present our method for verifying a.s.~asymptotic stability of a given region via RSM computation. Our method consists of two modules which alternate within a loop: the {\em learner} and the {\em verifier}. In each loop iteration, the learner first learns an RSM candidate in the form of a neural network. The candidate is then passed to the verifier, which checks whether the learned candidate is indeed an RSM. If the answer is positive, the verifier terminates the loop and concludes the system's a.s.~asymptotic stability. Otherwise, the verifier computes a set of counterexamples which show that the candidate is not an RSM and passes it to the learner, which then proceeds with the next learning iteration. This process is repeated until either a learned candidate is verified or a given timeout is reached.

We consider a discrete-time stochastic dynamical system defined by a dynamics function $f$, a policy $\pi$ and a probability distribution $d$ with model assumptions as in the previous sections, and $\Stable\subseteq\mathcal{X}$ which is closed under system dynamics and has non-empty interior. The rest of this section describes the details behind our method for stability verification. The algorithm is presented in Algorithm~\ref{alg:algorithm}.

\subsection{Discretization and initial sampling} Recall, an RSM $V$ needs to satisfy the expected decrease condition in eq.~(\ref{eq:expdec}) at each point in $\mathcal{X}\backslash\Stable$. However, one of the main difficulties in verifying this condition when $V$ has a neural network form is that it is not clear how to compute a closed form for the expected value of $V$ at a successor system state. In order to be able to verify neural network RSM candidates, our method discretizes the state space and then verifies the expected decrease condition only at the states in the discretization (which we will show to be possible due to $f$, $\pi$ and $V$ all being Lipschitz continuous and $\mathcal{X}$ being compact). The {\em discretization} $\tilde{\mathcal{X}}$ of $\mathcal{X}\backslash \Stable$ satisfies the property that, for each $\mathbf{x}\in\mathcal{X}$, there is $\tilde{\mathbf{x}}\in\tilde{\mathcal{X}}$ with $||\mathbf{x}-\tilde{\mathbf{x}}||_1<\tau$, with $\tau$ an algorithm parameter that we call the {\em mesh} of $\tilde{X}$. Since $\mathcal{X}$ is compact and so $\mathcal{X}\backslash \Stable$ is bounded, the discretization consists of finitely many states.

The method also initializes the collection of pairs $\mathcal{D} = \{ (\mathbf{x},\mathcal{D}_{\mathbf{x}}) \mid \mathbf{x}\in\tilde{\mathcal{X}}\}$, where each $\mathcal{D}_{\mathbf{x}}$ consists of $N$ successor states of $\mathbf{x}$ obtained by independent sampling. Here, $N\in\mathbb{N}$ is an algorithm parameter. The collection $\mathcal{D}$ will be used to approximate expected values at successor states for each $\mathbf{x}$ in $\tilde{\mathcal{X}}$ in the loss function used by the learner.

\begin{algorithm}[t]
\caption{Verification of a.s.~asymptotic stability}
\label{alg:algorithm}
\begin{algorithmic}
\STATE \textbf{Input} Dynamics function $f$, policy $\pi$, disturbance distribution $d$, region $\Stable\subseteq\mathcal{X}$, Lipschitz constants $L_f$, $L_{\pi}$ \\
parameters $\tau>0$, $N\in\mathbb{N}$, $\lambda > 0$

\STATE $\tilde{\mathcal{X}} \leftarrow $ discretization of $\mathcal{X}\backslash \Stable$ with mesh $\tau$
\FOR{$\mathbf{x}$ in $\tilde{\mathcal{X}}$}
\STATE $\mathcal{D}_{\mathbf{x}}\leftarrow$ $N$ sampled successor states of $\mathbf{x}$
\ENDFOR
\WHILE{timeout not reached}
\STATE $V \leftarrow$ trained candidate by minimizing the loss in eq.~(\ref{eq:loss})
\STATE $L_V \leftarrow$ Lipschitz constant of $V$
\STATE $K\leftarrow L_V \cdot (L_f \cdot (L_\pi + 1) + 1)$
\IF{$\exists \mathbf{x}\in\tilde{\mathcal{X}}$ s.t. $\mathbb{E}_{\omega\sim d}[ V ( f(\mathbf{x}, \pi(\mathbf{x}), \omega) ) ] \geq V(\mathbf{x}) - \tau \cdot K$}
\STATE $\mathcal{D}_{\mathbf{x}}\leftarrow$ add $N$ sampled successor states of $\mathbf{x}$
\ELSE
\STATE \textbf{Return} A.s.~asymptotically stable
\ENDIF
\ENDWHILE
\STATE \textbf{Return} Unknown
\end{algorithmic}
\end{algorithm}

\subsection{Verifier}

In order to motivate the form of the loss function used by the learner, we first describe the verifier module of our algorithm. For a neural network $V$ to be an RSM as in Definition~\ref{def:stochasticlyapunov}, it needs to be (1)~continuous, (2)~nonnegative at each state, and (3)~to satisfy the expected decrease condition in eq.~(\ref{eq:expdec}) for each state in $\mathcal{X}\backslash\Stable$. Since $V$ is a neural network we already know that it is a continuous function. Moreover, since $\mathcal{X}$ is compact and $V$ is continuous, the function $V$ admits a finite global lower bound $-m\in\mathbb{R}$. Hence, if we verify that $V$ satisfies the expected decrease condition, we may consider the function $V'(\mathbf{x})=V(\mathbf{x})+m$ which is in addition nonnegative and thus an RSM to conclude a.s.~asymptotic stability of $\Stable$. Therefore, the verifier only needs to check that $V$ satisfies the expected decrease condition in eq.~(\ref{eq:expdec}) for each state in $\mathcal{X}\backslash\Stable$, from which it immediately follows that $V'$ is an RSM.

As explained above, checking this for each state in $\mathcal{X}\backslash\Stable$ is not feasible since we cannot compute a closed form for the expected value of $V$ at a successor system state. Instead, we show that it is sufficient to check a slightly stricter condition on states in the discretization $\tilde{\mathcal{X}}$. Let $L_f$, $L_\pi$ and $L_V$ be the Lipschitz constants of $f$, $\pi$ and the candidate function $V$, respectively. We assume that the Lipschitz constant for the dynamics function $f$ and the policy $\pi$ are provided, and use the method of~\cite{SzegedyZSBEGF13} to compute the Lipschitz constant of the neural network candidate $V$ (and also of $\pi$, in cases when $\pi$ is a neural network policy). Then define
\begin{equation}\label{eq:k}
    K = L_V \cdot (L_f \cdot (L_\pi + 1) + 1).
\end{equation}
In order to verify that $V$ satisfies the expected decrease condition in eq.~(\ref{eq:expdec}) for each state in $\mathcal{X}\backslash\Stable$, the verifier checks for each $\mathbf{x}$ in the discretization $\tilde{\mathcal{X}}$ that
\begin{equation}\label{eq:lipschitz}
    \mathbb{E}_{\omega\sim d}\Big[ V \Big( f(\mathbf{x}, \pi(\mathbf{x}), \omega) \Big) \Big] < V(\mathbf{x}) - \tau \cdot K
\end{equation}
If eq.~(\ref{eq:lipschitz}) holds for each $\mathbf{x}\in \tilde{\mathcal{X}}$, the verifier concludes a.s.~asymptotic stability of $\Stable$. Otherwise, if $\mathbf{x}\in\tilde{\mathcal{X}}$ for which eq.~(\ref{eq:lipschitz}) does not hold is found, it is passed to the learner by independently sampling $N$ successor states of $\mathbf{x}$ which are then added to the set $\mathcal{D}_{\mathbf{x}}$.

Theorem~\ref{theorem:lipschitz} establishes the correctness of Algorithm~\ref{alg:algorithm} by showing that it indeed suffices to check eq.~(\ref{eq:lipschitz}) for states in the discretization. The proof uses the fact that $f$ and $\pi$ are Lipschitz continuous and that $\mathcal{X}$ is compact, and is provided in the Supplementary Material.

\begin{theorem}\label{theorem:lipschitz}
Suppose that the verifier in Algorithm~\ref{alg:algorithm} verifies that $V$ satisfies eq.~(\ref{eq:lipschitz}) for each $\mathbf{x}\in\tilde{\mathcal{X}}$. Let $-m\in\mathbb{R}$ be such that $V(\mathbf{x})\geq -m$ for each $\mathbf{x}\in\mathcal{X}$. Then, the function $V'(\mathbf{x})=V(\mathbf{x})+m$ is an RSM for $\Stable$. Hence, $\Stable$ is a.s.~asymptotically stable.
%, with $\eps>0$ as in the definition of RSMs given by
%\[ \eps = \min_{\mathbf{x}\in \tilde{\mathcal{X}}} \Big( V(\mathbf{x}) - \tau \cdot K - \mathbb{E}_{\omega\sim d}\Big[ V \Big( f(\mathbf{x}, \pi(\mathbf{x}), \omega) \Big) \Big] \Big). \]
\end{theorem}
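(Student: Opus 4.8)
The plan is to upgrade the finitely many discretized checks in eq.~(\ref{eq:lipschitz}) to the continuum expected-decrease condition eq.~(\ref{eq:expdec}) at \emph{every} state of $\mathcal{X}\backslash\Stable$, and then to shift $V$ by the constant $m$ to obtain a nonnegative function, at which point $V'$ is an RSM and Theorem~\ref{thm:soundness} applies. Write $g(\mathbf{y}) = \mathbb{E}_{\omega\sim d}[V(f(\mathbf{y},\pi(\mathbf{y}),\omega))]$ for the one-step expected value. The crux is to show that $g$ is Lipschitz and that the slack $\tau\cdot K$ built into eq.~(\ref{eq:lipschitz}) is exactly what is needed to absorb the interpolation error between an arbitrary $\mathbf{x}$ and a nearby grid point.

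First I would bound the Lipschitz constant of $g$. For each fixed $\omega\in\support(d)$, the map $\mathbf{y}\mapsto f(\mathbf{y},\pi(\mathbf{y}),\omega)$ is a composition: $\mathbf{y}\mapsto(\mathbf{y},\pi(\mathbf{y}),\omega)$ is $(1+L_\pi)$-Lipschitz in the $l_1$-norm (the identity in the state slot contributes $1$, the policy $\pi$ contributes $L_\pi$, the constant disturbance contributes $0$), and $f$ then contributes a further factor $L_f$, so this map is $L_f(1+L_\pi)$-Lipschitz; composing with the $L_V$-Lipschitz $V$ shows that $\mathbf{y}\mapsto V(f(\mathbf{y},\pi(\mathbf{y}),\omega))$ is $L_V L_f(1+L_\pi)$-Lipschitz, uniformly in $\omega$. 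Pulling the expectation through the absolute value (monotonicity of $\mathbb{E}$) then gives that $g$ is itself $L_V L_f(1+L_\pi)$-Lipschitz. Since $V$ is separately $L_V$-Lipschitz, the combined per-unit-distance error controlling both $g$ and $V$ is $L_V + L_V L_f(1+L_\pi) = L_V\big(1 + L_f(1+L_\pi)\big) = K$, precisely the constant of eq.~(\ref{eq:k}).

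Next comes the interpolation. Fix an arbitrary $\mathbf{x}\in\mathcal{X}\backslash\Stable$ and choose $\tilde{\mathbf{x}}\in\tilde{\mathcal{X}}$ with $\|\mathbf{x}-\tilde{\mathbf{x}}\|_1 < \tau$, guaranteed by the mesh property of $\tilde{\mathcal{X}}$. Because $\tilde{\mathcal{X}}$ is finite and eq.~(\ref{eq:lipschitz}) holds \emph{strictly} at every grid point, the quantity $\eps := \min_{\tilde{\mathbf{x}}\in\tilde{\mathcal{X}}}\big(V(\tilde{\mathbf{x}}) - \tau K - g(\tilde{\mathbf{x}})\big)$ is strictly positive. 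Chaining the Lipschitz bounds $g(\mathbf{x}) \le g(\tilde{\mathbf{x}}) + L_V L_f(1+L_\pi)\|\mathbf{x}-\tilde{\mathbf{x}}\|_1$ and $V(\tilde{\mathbf{x}}) \le V(\mathbf{x}) + L_V\|\mathbf{x}-\tilde{\mathbf{x}}\|_1$ with the grid bound $g(\tilde{\mathbf{x}}) \le V(\tilde{\mathbf{x}}) - \tau K - \eps$, the two Lipschitz contributions combine into $K\|\mathbf{x}-\tilde{\mathbf{x}}\|_1 < K\tau$, which cancels the $-\tau K$ slack and leaves $g(\mathbf{x}) < V(\mathbf{x}) - \eps$. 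Hence $V$ satisfies eq.~(\ref{eq:expdec}) with this uniform $\eps>0$ everywhere on $\mathcal{X}\backslash\Stable$.

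Finally, adding the constant $m$ leaves the expected-decrease inequality intact (the $m$ appears on both sides and cancels), so $V'=V+m$ also satisfies eq.~(\ref{eq:expdec}) with the same $\eps$; by the choice of $m$ it is nonnegative, and it is continuous as a shift of a neural network. Thus $V'$ is an RSM for $\Stable$, and Theorem~\ref{thm:soundness} yields a.s.~asymptotic stability. The step I expect to be the main obstacle is the Lipschitz-constant bookkeeping: getting the composition factor $L_f(1+L_\pi)$ right (in particular the implicit identity term arising from feeding $\mathbf{x}$ simultaneously into the state slot and, through $\pi$, the action slot of $f$), justifying that taking the expectation preserves the Lipschitz bound, and verifying that these constants reassemble into exactly $K$ so that the $\tau K$ margin is neither insufficient nor wastefully large.
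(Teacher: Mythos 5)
Your proposal is correct and follows essentially the same route as the paper's proof: the same choice of $\eps$ as the minimum over the finite grid of the slack $V(\tilde{\mathbf{x}}) - \tau K - \mathbb{E}_{\omega\sim d}[V(f(\tilde{\mathbf{x}},\pi(\tilde{\mathbf{x}}),\omega))]$, the same Lipschitz interpolation splitting $K$ into the $L_V L_f(1+L_\pi)$ contribution from the expected-value term and the $L_V$ contribution from $V$ itself, and the same conclusion via the soundness theorem after shifting by $m$. Your explicit justification that taking the expectation preserves the uniform-in-$\omega$ Lipschitz bound is a point the paper handles more tersely, but it is the same argument.
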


We remark that the cardinality of the discretization $\tilde{\mathcal{X}}$ grows exponentially in the dimension of the state space, which in turn implies an exponential complexity for each verification step in our algorithm. This limitation is also present in related works on stability analysis in deterministic dynamical systems~\cite{BerkenkampTS017}. A potential approach to overcome the complexity bottleneck would be to discretize different dimensions and regions of the state space with a heterogeneous instead of a uniform granularity.

\paragraph{Expected value computation} What is left to be described is how our algorithm computes the expected value in eq.~(\ref{eq:lipschitz}) for a given state $\mathbf{x}\in\tilde{\mathcal{X}}$. This is {\em not} trivial, since $V$ is a neural network and so we do not have a closed form for the expected value. 
However, we can bound the expected value via interval arithmetic. In particular, let $\mathbf{x}\in\tilde{\mathcal{X}}$ be a throughout fixed state for which we want to bound the expected value $\mathbb{E}_{\omega\sim d}[ V ( f(\mathbf{x}, \pi(\mathbf{x}), \omega) )]$. Our algorithm partitions the disturbance space $\mathcal{N}\subseteq\mathbb{R}^p$ into finitely many cells $\text{cell}(\mathcal{N}) = \{\mathcal{N}_1,\dots,\mathcal{N}_{k}\}$, with $k$ being the number of cells.
We use $\mathrm{maxvol}=\max_{\mathcal{N}_i\in \text{cell}(\mathcal{N})}\mathsf{vol}(\mathcal{N}_i)$ to denote the maximal volume with respect to the Lebesgue measure over $\mathbb{R}^p$ of any cell in the partition. The algorithm then bounds the expected value via
%We say that the {\em mesh} of $\text{cell}(\mathcal{N})$ is the maximal volume with respect to the Lebesgue measure over $\mathbb{R}^p$ of any cell in the partition, i.e.~$\max_{\mathcal{N}_i\in \text{cell}(\mathcal{N})}\mathsf{vol}(\mathcal{N}_i)$. The algorithm then bounds the expected value via
\begin{equation*}
    \mathbb{E}_{\omega\sim d}\Big[ V \Big( f(\mathbf{x}, \pi(\mathbf{x}), \omega) \Big) \Big] \leq \sum_{\mathcal{N}_i\in \text{cell}(\mathcal{N})} \mathrm{maxvol} \cdot \sup_{\omega\in \mathcal{N}_i} F(\omega)
\end{equation*}
%\begin{equation*}
%\begin{split}
%    \mathbb{E}_{\omega\sim d}\Big[ V \Big( f(\mathbf{x}, \pi(\mathbf{x}), \omega) \Big) \Big] &\leq \sum_{\mathcal{N}_i\in \text{cell}(\mathcal{N})} \mathsf{vol}(\mathcal{N}_i) \cdot \sup_{\omega\in \mathcal{N}_i} F(\omega) \\
%    &\leq \sum_{\mathcal{N}_i\in \text{cell}(\mathcal{N})} \text{mesh} \cdot \sup_{\omega\in \mathcal{N}_i} F(\omega)
%\end{split}
%\end{equation*}
where $F(\omega) = V( f(\mathbf{x}, \pi(\mathbf{x}), \omega))$. Each supremum is then bounded from above via interval arithmetic by using the method of~\cite{Gowal18}. In our experimental evaluation, we observed that this method computes very tight bounds when the number of cells is sufficiently large.

Note that $\mathrm{maxvol}$ is not finite in cases when $\mathcal{N}$ is unbounded. In order to allow expected value computation for an unbounded $\mathcal{N}$, we first apply the probability integral transform~\cite{Murphy12} to each univariate probability distribution in $d$. Recall, in our model assumptions we assumed that $d$ is a product of univariate distributions and our dynamics function $f$ takes the most general form.

\subsection{Learner}

We now describe the learner module of our algorithm. The learner constructs an RSM candidate function as a multi-layer neural network $V_{\theta}$, where $\theta$ is the vector of neural network parameters. A candidate neural network is learned by minimizing the following loss function
\begin{equation}\label{eq:loss}
    \mathcal{L}(\theta) = \mathcal{L}_{\text{RSM}}(\theta) + \lambda \cdot  \mathcal{L}_{\text{Lipschitz}}(\theta).
\end{equation}
The first loss term $\mathcal{L}_{\text{RSM}}(\theta)$ is defined via
\begin{equation*}
    \mathcal{L}_{\text{RSM}}(\theta) = \frac{1}{|\tilde{\mathcal{X}}|}\sum_{\mathbf{x}\in\tilde{\mathcal{X}}}\Big( \max\Big\{\sum_{\mathbf{x}'\in\mathcal{D}_{\mathbf{x}}}\frac{V_{\theta}(\mathbf{x}')}{|\mathcal{D}_{\mathbf{x}}|}-V_{\theta}(\mathbf{x}) + \tau \cdot K, 0\Big\} \Big).
\end{equation*}
Intuitively, for $\mathbf{x}\in\tilde{\mathcal{X}}$, the corresponding term in the sum incurs a loss whenever the condition in eq.~(\ref{eq:lipschitz}) is violated. Since the closed form for the expected value in eq.~(\ref{eq:lipschitz}) in terms of parameters $\theta$ cannot be computed, for each $\mathbf{x}\in\tilde{\mathcal{X}}$ we approximate it as the mean of the values of $V$ at sampled successor states of $\mathbf{x}$ that the algorithm stores in the set $\mathcal{D}_{\mathbf{x}}$.

The second loss term $\lambda \cdot  \mathcal{L}_{\text{Lipschitz}}(\theta)$ is the regularization term used to incentivize that the Lipschitz constant $L_{V_{\theta}}$ of $V_{\theta}$ does not exceed some tolerable threshold, and hence to enforce that $\tau\cdot K$ in eq.~(\ref{eq:lipschitz}) is sufficiently small. The constant $\lambda$ is an algorithm parameter balancing the two loss terms, and we define
\[ \mathcal{L}_{\text{Lipschitz}}(\theta) = \max\Big\{ \frac{\delta}{\tau \cdot (L_f \cdot (L_\pi + 1) + 1)} - L_{V_{\theta}}, 0 \Big\}. \]
Here, $\delta$ is a parameter that specifies the threshold, and $L_{V_{\theta}}$ in terms of $\theta$ is computed as in~\cite{SzegedyZSBEGF13}.

To conclude this section, we note that the loss function $\mathcal{L}(\theta)$ is nonnegative but is not necessarily equal to $0$ even if $V_{\theta}$ satisfies eq.~(\ref{eq:lipschitz}) for each $\mathbf{x}\in\tilde{\mathcal{X}}$ and its Lipschitz constant is below the allowed threshold. This is because $\mathcal{L}(\theta)$ depends on samples in $\mathcal{D}$ which are used to {\em approximate} the expected values in eq.~(\ref{eq:lipschitz}). However, in Theorem~\ref{thm:loss} we show that the loss $\mathcal{L}(\theta) \rightarrow 0$ almost-surely as we add samples to the set $\mathcal{D}_{\mathbf{x}}$ for each  $\mathbf{x}\in\tilde{\mathcal{X}}$, whenever $V_{\theta}$ satisfies eq.~(\ref{eq:lipschitz}) for each $\mathbf{x}\in\tilde{\mathcal{X}}$ and its Lipschitz constant is below the allowed threshold. The claim follows from the Strong Law of Large Numbers~\cite[Section 12.10]{Williams91} and the proof can be found in the Supplementary Material.

\begin{theorem}\label{thm:loss}
Let $M = \min_{\mathbf{x}\in\tilde{\mathcal{X}}}|\mathcal{D}_{\mathbf{x}}|$. If $V_{\theta}$ satisfies eq.~(\ref{eq:lipschitz}) for each $\mathbf{x}\in\tilde{\mathcal{X}}$ and if $L_{V_{\theta}} \leq \delta / (\tau\cdot (L_f \cdot (L_\pi + 1) + 1))$, then $\lim_{M\rightarrow \infty} \mathcal{L}(\theta) = 0$ holds almost-surely.
\end{theorem}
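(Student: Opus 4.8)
The plan is to split the loss in eq.~(\ref{eq:loss}) into the sampling-dependent term $\mathcal{L}_{\text{RSM}}(\theta)$ and the regularization term $\lambda\cdot\mathcal{L}_{\text{Lipschitz}}(\theta)$, and to treat them separately. The regularization term depends only on the Lipschitz constant $L_{V_\theta}$ and is entirely independent of the sample collection $\mathcal{D}$; under the assumed bound $L_{V_\theta}\leq\delta/(\tau\cdot(L_f\cdot(L_\pi+1)+1))$ the regularizer evaluates to $0$. Hence $\mathcal{L}(\theta)=\mathcal{L}_{\text{RSM}}(\theta)$ under the hypotheses, and it suffices to prove that $\mathcal{L}_{\text{RSM}}(\theta)\to 0$ almost surely as $M\to\infty$.

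To analyze $\mathcal{L}_{\text{RSM}}(\theta)$, I would fix an arbitrary state $\mathbf{x}\in\tilde{\mathcal{X}}$ and examine its summand. By construction, each successor state $\mathbf{x}'\in\mathcal{D}_{\mathbf{x}}$ arises by drawing $\omega\sim d$ independently and setting $\mathbf{x}'=f(\mathbf{x},\pi(\mathbf{x}),\omega)$, so the values $V_\theta(\mathbf{x}')$ over $\mathbf{x}'\in\mathcal{D}_{\mathbf{x}}$ are i.i.d.\ copies of the random variable $V_\theta(f(\mathbf{x},\pi(\mathbf{x}),\omega))$ with $\omega\sim d$. Since $V_\theta$ is continuous and $\mathcal{X}$ is compact, $V_\theta$ is bounded on $\mathcal{X}$, so this random variable is integrable and the Strong Law of Large Numbers~\cite[Section 12.10]{Williams91} applies: the empirical mean $\frac{1}{|\mathcal{D}_{\mathbf{x}}|}\sum_{\mathbf{x}'\in\mathcal{D}_{\mathbf{x}}}V_\theta(\mathbf{x}')$ converges almost surely to $\mathbb{E}_{\omega\sim d}[V_\theta(f(\mathbf{x},\pi(\mathbf{x}),\omega))]$ as $|\mathcal{D}_{\mathbf{x}}|\to\infty$.

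Consequently, the argument of the inner $\max$ for state $\mathbf{x}$ converges almost surely to $\mathbb{E}_{\omega\sim d}[V_\theta(f(\mathbf{x},\pi(\mathbf{x}),\omega))]-V_\theta(\mathbf{x})+\tau\cdot K$, which by the standing assumption that $V_\theta$ satisfies eq.~(\ref{eq:lipschitz}) is \emph{strictly negative}. Because $y\mapsto\max\{y,0\}$ is continuous, the per-state term $\max\{\,\cdot\,,0\}$ therefore converges almost surely to $0$; the strict inequality in eq.~(\ref{eq:lipschitz}) is precisely what forces the limit to land strictly below the kink of the $\max$, so the summand is eventually exactly $0$. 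Since $\tilde{\mathcal{X}}$ is finite, I would take the intersection of these finitely many probability-one events to conclude that all summands vanish in the limit simultaneously almost surely, whence $\mathcal{L}_{\text{RSM}}(\theta)\to 0$ almost surely. Finally, since $M=\min_{\mathbf{x}\in\tilde{\mathcal{X}}}|\mathcal{D}_{\mathbf{x}}|$, letting $M\to\infty$ forces $|\mathcal{D}_{\mathbf{x}}|\to\infty$ for every $\mathbf{x}\in\tilde{\mathcal{X}}$, which delivers the claimed limit $\lim_{M\to\infty}\mathcal{L}(\theta)=0$.

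The argument is essentially bookkeeping around the SLLN, and I expect no deep obstacle. The two points requiring care are: (i) checking integrability so that the SLLN applies, which is immediate from continuity of $V_\theta$ together with compactness of $\mathcal{X}$; and (ii) correctly passing from the marginal convergence of each per-state empirical mean to a single almost-sure statement indexed by $M$ --- this is handled by the finiteness of $\tilde{\mathcal{X}}$, which lets me intersect finitely many null sets, together with the bound $|\mathcal{D}_{\mathbf{x}}|\geq M$. The only genuinely conceptual ingredient is the strict inequality in eq.~(\ref{eq:lipschitz}), which is what guarantees that each $\max$ term collapses to $0$ in the limit rather than merely remaining bounded.
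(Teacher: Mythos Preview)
Your proposal is correct and follows essentially the same approach as the paper's own proof: reduce to $\mathcal{L}_{\text{RSM}}(\theta)$ via the Lipschitz hypothesis, apply the Strong Law of Large Numbers to each per-state empirical mean (using continuity of $V_\theta$ on the compact $\mathcal{X}$ for integrability), and then use continuity of $y\mapsto\max\{y,0\}$ together with the strict inequality in eq.~(\ref{eq:lipschitz}) to conclude each summand vanishes in the limit. Your explicit remarks about intersecting finitely many almost-sure events over $\tilde{\mathcal{X}}$ and about $M\to\infty$ forcing each $|\mathcal{D}_{\mathbf{x}}|\to\infty$ are slightly more careful than the paper's write-up, but the underlying argument is the same.
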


\begin{figure}[t]
    \centering
    \includegraphics[width=0.5\textwidth]{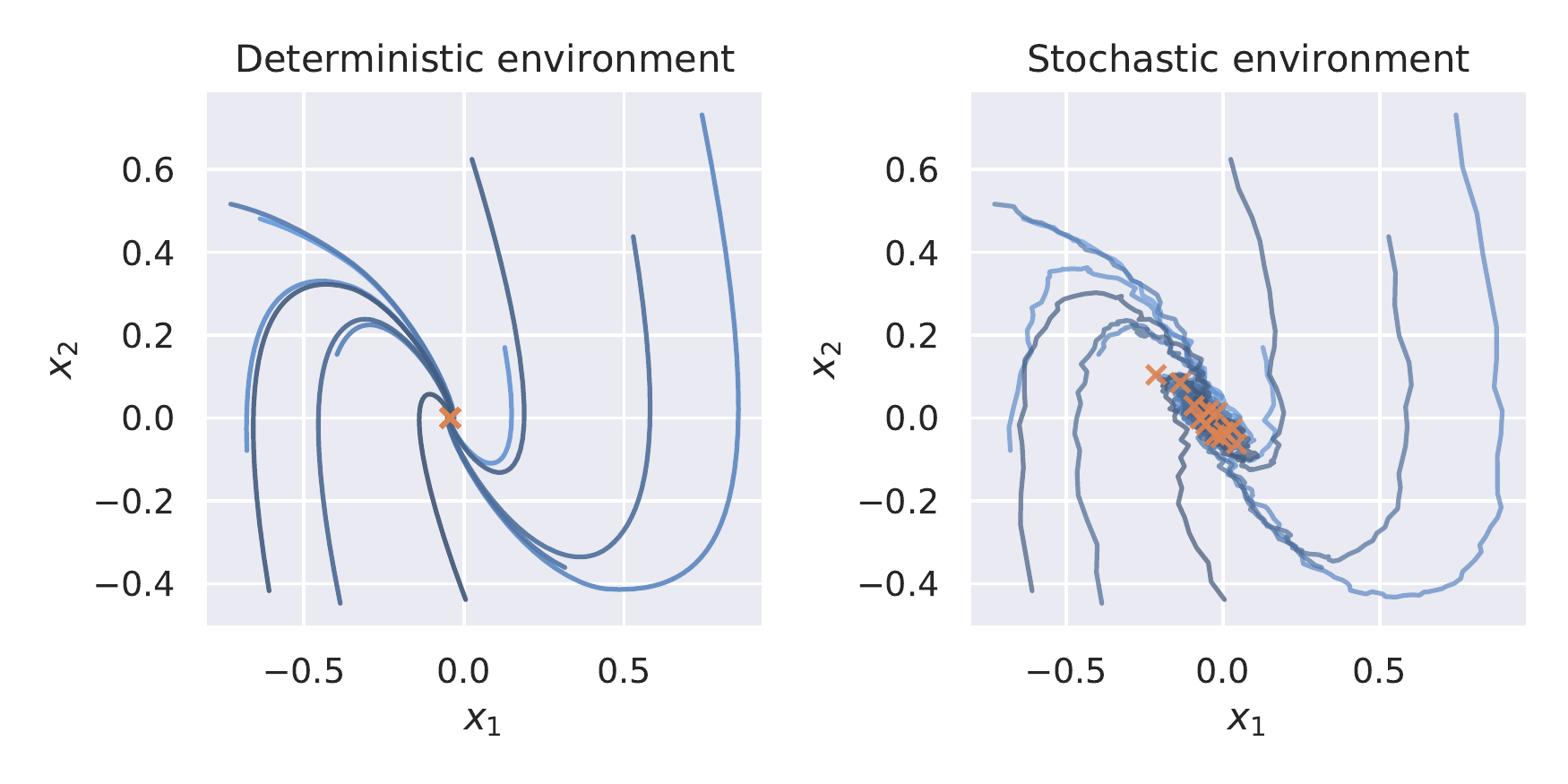}
    \caption{Example of a deterministic and a stochastic system with the same dynamics function, illustrating the difficulties of proving stability in stochastic systems. The orange markers indicate the system state after 200 time steps.}
    \label{fig:environment}
\end{figure}

\subsection{Learning stable policies}

While in this work we focus on the stability verification problem for a given control policy, our approach can also be adapted to the setting in which we want to {\em learn} a stable neural network policy for the region $\Stable$ together with a formal certificate for the a.s.~asymptotic stability of $\Stable$. This can be done by replacing the loss function in eq.~(\ref{eq:loss}) with
\[ \mathcal{L}(\theta, \mathbf{u}) = \mathcal{L}_{\text{RSM}}(\theta, \mathbf{u}) + \lambda \cdot  \mathcal{L}_{\text{Lipschitz}}(\theta, \mathbf{u}) \]
where $\mathbf{u}$ is now a vector of policy parameters while $\theta$ is again a vector of neural network parameters for the RSM candidate. The correctness of our algorithm proved in Theorem~\ref{theorem:lipschitz} then ensures that any learned and verified control policy is indeed stable. Note that this modified algorithm does not try to optimize the expected reward obtained by the learned policy, but only ensures stability. Exploring ways to learn a stable policy while simultaneously maximizing the expected reward is an interesting direction of future work.

\section{Experiments}

\begin{figure}[t]
    \centering
    \includegraphics[width=0.5\textwidth]{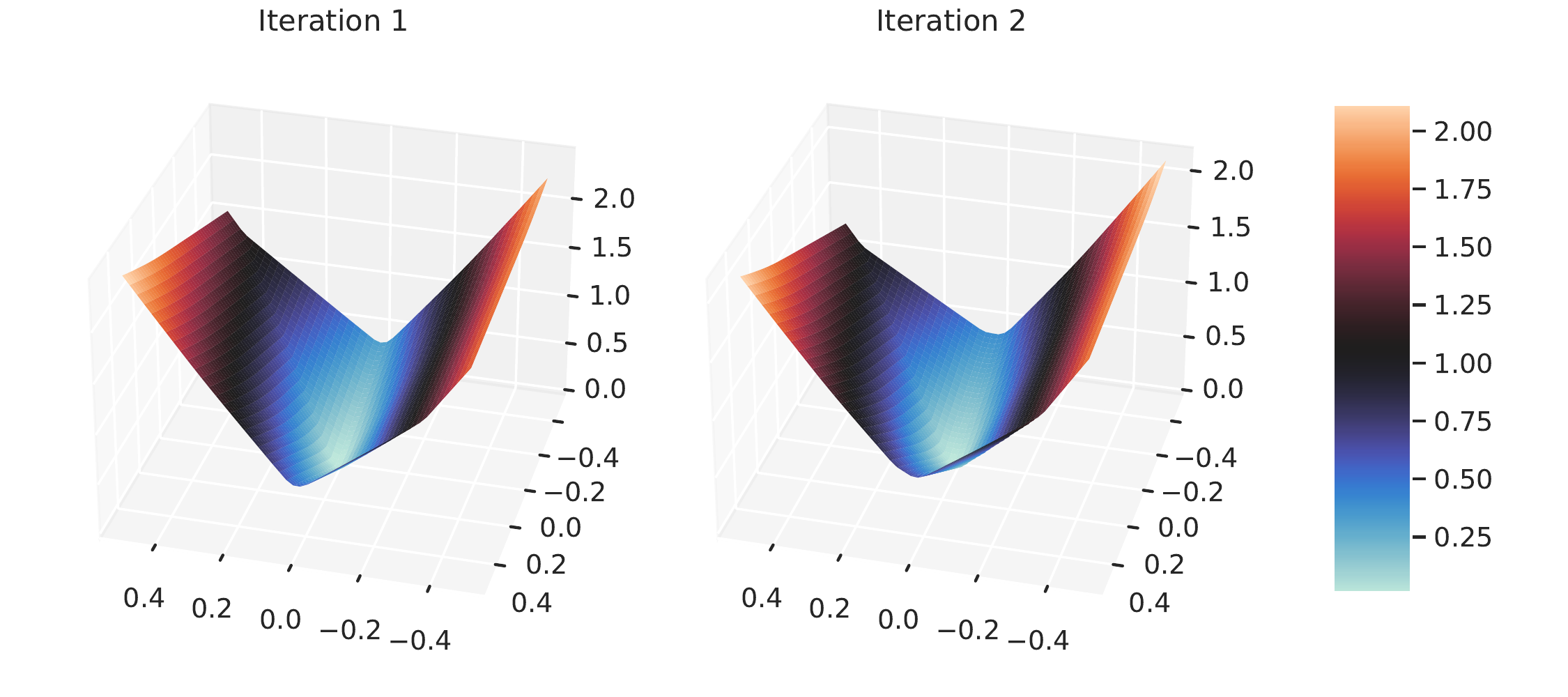}
    \caption{Learned RSM candidates after 1 and 2 iterations of our algorithm for the stochastic inverted pendulum task. The candidate on the left violates the expected decrease condition while the function of the right is a verified RSM.}
    \label{fig:rsm}
\end{figure}

We validate our algorithm empirically on two RL benchmark environments. 
Our first benchmark is a two-dimensional dynamical system of the form $\mathbf{x}_{t+1} = A\mathbf{x}_t + B g(\mathbf{u}_t) + \omega$, where $\omega$ is a disturbance vector sampled from a zero-mean triangular distribution. The function $g$ clips the control action to stay within the interval $[-1,1]$. The matrices $A$ and $B$ are provided in the Supplementary Material.
%Example trajectories of a policy trained to stabilize the system with the environment using deterministic ($\omega =0$) and stochastic dynamics are shown in Figure \ref{fig:environment}.

Our second benchmark is the inverted pendulum problem \cite{gym}.
Contrarily to the standard inverted pendulum task, which has deterministic dynamics, we consider a more difficult stochastic variant. The system has two state variables $x_1$ and $x_2$ which represent the angle and the angular velocity of the pendulum. The objective of this task is to balance the pendulum in an upright position through control actions in the form of a torque that is applied to the pendulum. Our stochastic variant of the task applies a zero-mean triangular noise to both state variables.

For each RL task, we consider the state space $\mathcal{X}=\{\mathbf{x} \mid ||\mathbf{x}||_1 \leq 0.5\}$ and train a control policy comprised of two hidden layers with 128 ReLU units each by using proximal policy optimization \cite{schulman2017proximal}, while applying our Lipschitz regularization to keep the Lipschitz constant of the policy within a reasonable bound. We then run our algorithm to verify that the region $\Stable = \{\mathbf{x} \mid ||\mathbf{x}||_1 \leq 0.2\}$ is a.s.~asymptotically stable. Our RSM neural networks consist of one hidden layer with 128 ReLU units.

Example trajectories of a policy trained for the first benchmark with the deterministic ($\omega =0$) and stochastic dynamics are shown in Figure \ref{fig:environment}. The policy stabilizes the deterministic system in a single point, however this is not the case for the stochastic system. This illustrates the intricacies of verifying stability in stochastic systems, and justifies our choice to consider stabilizing regions with non-empty interiors.

%For each RL task, we first train a control policy comprised of two hidden layers with 128 ReLU-units each by using proximal policy optimization \cite{schulman2017proximal}, while applying our Lipschitz regularization on the policy network to keep the Lipschitz constant of the policy within a reasonable bound.
%We then run our algorithm to prove that, for the state space $\mathcal{X}=\{\mathbf{x} \mid ||\mathbf{x}||_1 \leq 0.5\}$, the region $\Stable = \{\mathbf{x} \mid ||\mathbf{x}||_1 \leq 0.2\}$ is a.s.~asymptotically stable. Our RSM neural networks consists of one hidden layer with 128 ReLU units.

Our method could successfully learn and verify RSMs for both systems within a reasonable time frame. The runtime statistics are shown in Table \ref{tab:stats}. The final RSM neural network for the inverted pendulum task is shown in Figure~\ref{fig:rsm}.
We further computed the $\eps$ of the RSM network according to Definition \ref{def:stochasticlyapunov} for the inverted pendulum task to obtain the convergence time bounds as outlined in Theorem \ref{thm:bounds}. The resulting convergence time bounds are shown in Figure \ref{fig:convbounds}.

\begin{figure}
    \centering
    \includegraphics[width=0.30\textwidth]{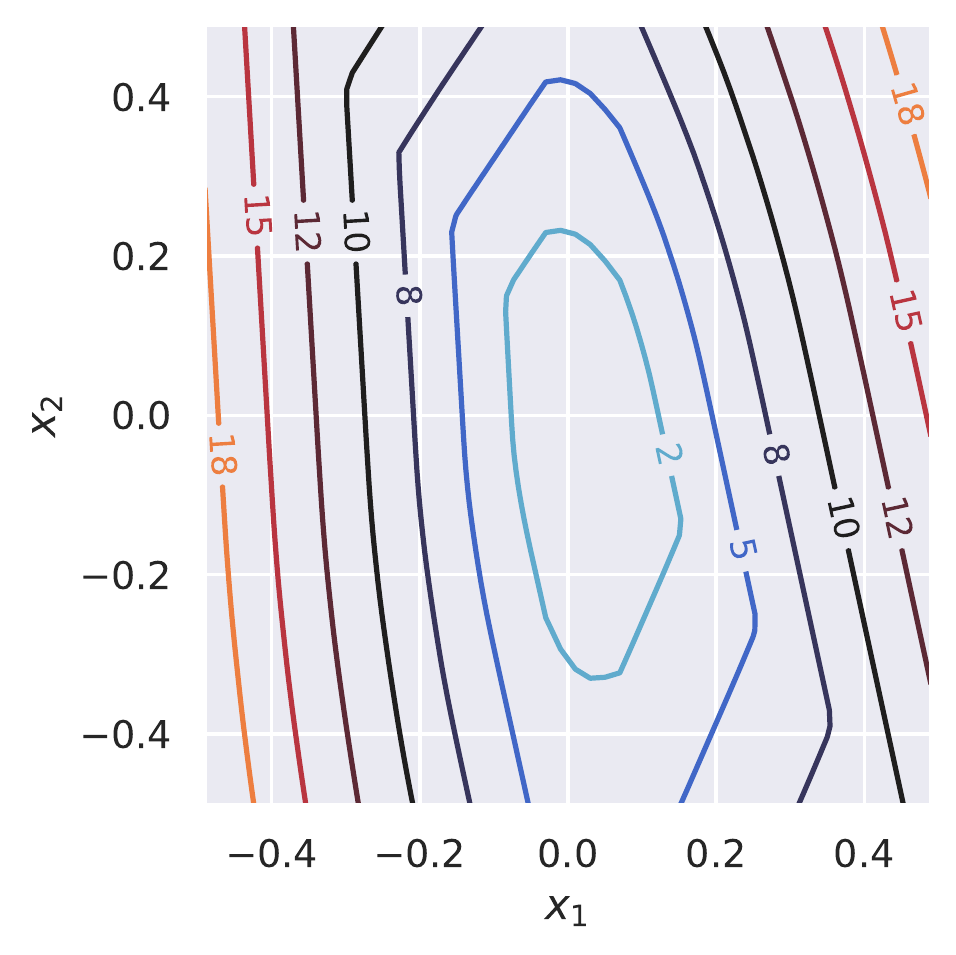}
    \caption{Contour lines of the convergence time bounds obtained from the RSM on the inverted pendulum task.}
    \label{fig:convbounds}
\end{figure}

We perform an additional experiment to study the effectiveness of our method for computing bounds on the expected value a neural network. In particular, we sample $100$ random states of the inverted pendulum environment. For each sampled state, we use our method to compute the bound on the expected value of the final RSM neural network (shown in Figure \ref{fig:rsm}) in a successor system state, with different sizes of the cell partition. We then compute the ground-truth of the expected value by averaging the RSM value at 1000 independently sampled successor states (Strong Law of Large Numbers). The results shown in Figure \ref{fig:integral} indicate that, even with a modest size of the cell partition, a tight bound can be obtained. As the partition is further refined, the expected value bound converges to the ground-truth.

\begin{table}[]
    \centering
    \begin{tabular}{c|ccc}\toprule
    Environment & Iters. & Mesh ($\tau$) & Runtime \\\midrule
        2D system & 4 & 0.002 & 559 \\
        Inverted pendulum & 2 & 0.01 & 176 \\\bottomrule
    \end{tabular}
    \caption{Number of learner-verifier loop iterations, mesh of the discretization used by the verifier, and the total algorithm runtime (in seconds).}
    \label{tab:stats}
\end{table}

\begin{figure}[t]
    \centering
    \includegraphics[width=0.5\textwidth]{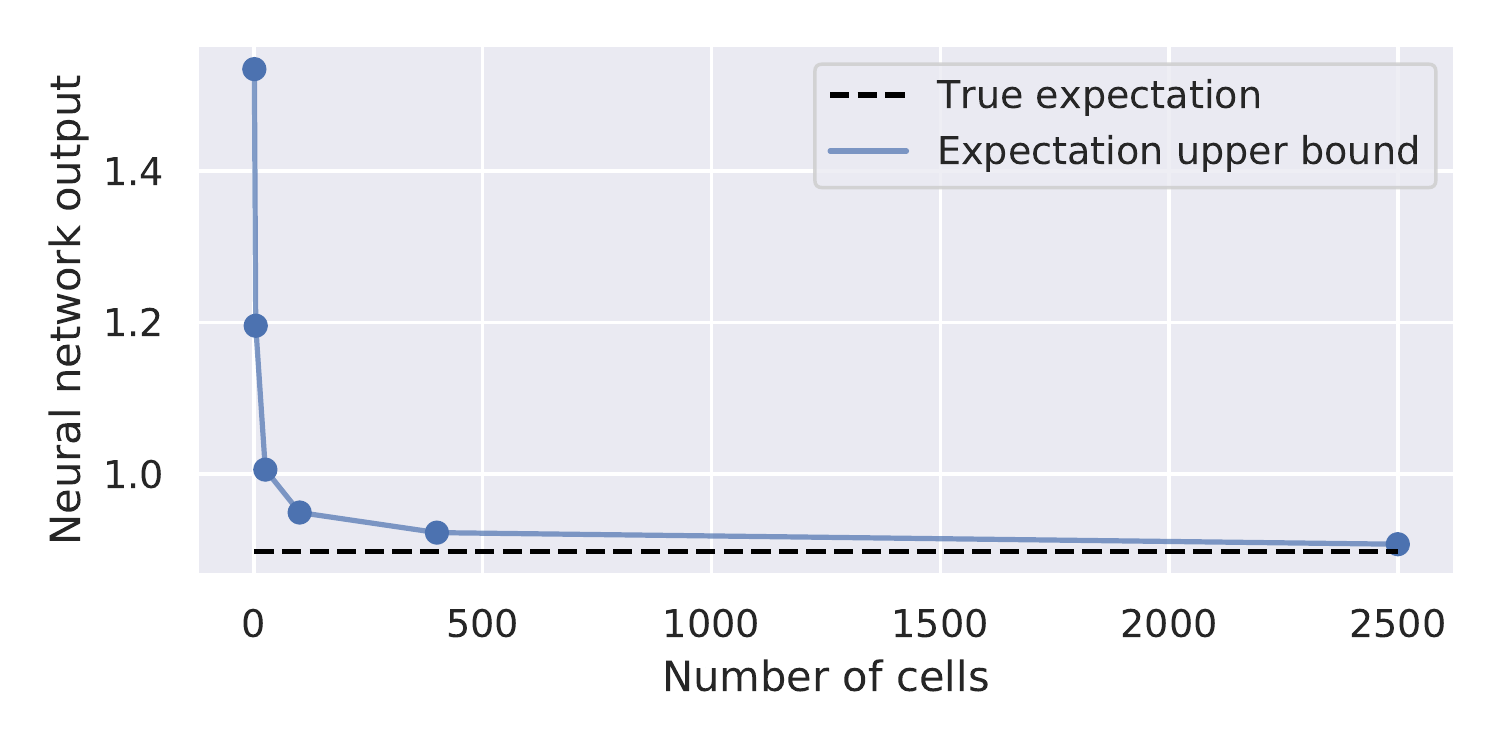}
    \caption{Comparison of our method for bounding the expected value of an RSM neural network with the ground-truth expected value on 100 randomly sampled states of the inverted pendulum environment. }
    \label{fig:integral}
\end{figure}

\section{Conclusion}

In this work, we study the stability verification problem for nonlinear stochastic control systems. We show, for the first time, that ranking supermartingales (RSMs) provide a formal certificate for a.s.~asymptotic stability as well as guarantees on the stabilization time. We then present a method for a.s.~asymptotic stability verification which learns and verifies an RSM in the form of a neural network. In order to design the verifier module of our algorithm, we propose a method for efficiently computing tight bounds on the expected value of a neural network function over a probability distribution. Finally, we validate our approach experimentally on a set of nonlinear stochastic RL environments with neural network policies. There are several interesting venues for future work. While we showed how our verification algorithm can be adapted to also learn a stabilizing policy, this adaptation does not try to optimize the learned policy. Exploring ways to learn high performing stabilizing policies is an interesting direction.
A limiting factor of our algorithm for computing RSMs is that the complexity of the verification step grows exponentially with the dimension of the state space. In order to overcome this limitation and improve scalability, future work may consider different ways to discretize the state space, such as an on-demand discretization that does not use the same granularity at all parts of the state space. %We also show that using RSMs for a.s.~stability analysis provides formal guarantees on stabilization time but do not try to optimize tham, and an interesting future research directions would be to consider learning RSMs that optimize these bounds.
Another future research direction is to integrate our approach to safe exploration RL in systems with stochastic environments.

\section{Acknowledgement}
This work was supported in part by the ERC-2020-AdG 101020093, ERC CoG 863818 (FoRM-SMArt) and the European Union’s Horizon 2020 research and innovation programme under the Marie Skłodowska-Curie Grant Agreement No.~665385.

\bibliography{aaai22stability.bib}

\clearpage
\begin{center}
\textbf{\Large Supplementary Material}
\end{center}

\section{Proofs of Theorem~1 and Theorem~2}

In this section, we provide the proofs of Theorem~1 and Theorem~2 from the main text of the paper. Since these proofs heavily rely on results from martingale theory, we first provide an overview of relevant definitions and results. We then proceed to proving the theorems.

\subsection{An Aside on Martingale Theory}\label{sec:martingales}

\paragraph{Probability space} A {\em probability space} is a triple $(\Omega,\mathcal{F},\mathbb{P})$, where $\Omega$ is a non-empty {\em sample space}, $\mathcal{F}$ is a $\sigma$-algebra over $\Omega$ (i.e.~a collection of subsets of $\Omega$ that contains the empty set $\emptyset$ and is closed under complementation and countable union operations), and $\mathbb{P}$ is a {\em probability measure} over $\mathcal{F}$, i.e.~a function $\mathbb{P}:\mathcal{F}\rightarrow[0,1]$ that satisfies the following properties: (1)~$\mathbb{P}[\emptyset]=0$, (2)~$\mathbb{P}[\Omega\backslash A]=1-\mathbb{P}[A]$ for each $A\in\mathcal{F}$, and (3)~$\mathbb{P}[\cup_{i=0}^\infty A_i]=\sum_{i=0}^\infty\mathbb{P}[A_i]$ for any sequence $(A_i)_{i=0}^\infty$ of pairwise disjoint sets in $\mathcal{F}$.

\paragraph{Random variable} Given a probability space $(\Omega,\mathcal{F},\mathbb{P})$, a {\em random variable} is a function $X:\Omega\rightarrow\mathbb{R}\cup\{\pm\infty\}$ that is $\mathcal{F}$-measurable, i.e.~for each $a\in\mathbb{R}$ we have that $\{\omega\in\Omega\mid X(\omega)\leq a\}\in\mathcal{F}$. We use $\mathbb{E}[X]$ to denote the {\em expected value} of $X$ (for the formal definition of expected value, see~\cite{Williams91}). A {\em (discrete-time) stochastic process} is a sequence $(X_i)_{i=0}^{\infty}$ of random variables in $(\Omega,\mathcal{F},\mathbb{P})$.
	
\paragraph{Conditional expectation} Let $(\Omega,\mathcal{F},\mathbb{P})$ be a probability space and let $X$ be a random variable in $(\Omega,\mathcal{F},\mathbb{P})$. Given a sub-sigma-algebra $\mathcal{F}'\subseteq\mathcal{F}$, a {\em conditional expectation} of $X$ given $\mathcal{F}'$ is an $\mathcal{F}'$-measurable random variable $Y$ such that, for each $A\in\mathcal{F}'$, we have 
\[ \mathbb{E}[X\cdot\mathbb{I}_A]=\mathbb{E}[Y\cdot\mathbb{I}_A].\]
Here $\mathbb{I}_A:\Omega\rightarrow \{0,1\}$ is an {\em indicator function} of $A$, defined via $\mathbb{I}_A(\omega)=1$ if $\omega\in A$, and $\mathbb{I}_A(\omega)=0$ if $\omega\not\in A$. It is known that a conditional expectation of a random variable $X$ given $\mathcal{F}'$ exists if either (1)~$X$ is {\em integrable}, i.e.~$\mathbb{E}[|X|]<\infty$, or (2)~$X$ is  real-valued and nonnegative~\cite{Williams91} (though these two conditions are not necessary). Moreover, whenever the conditional expectation exists it is also almost-surely unique, i.e.~for any two $\mathcal{F}'$-measurable random variables $Y$ and $Y'$ which are conditional expectations of $X$ given $\mathcal{F}'$, we have that $\mathbb{P}[Y= Y']=1$. Therefore, we may pick any such random variable as a canonical conditional expectation and denote it by $\mathbb{E}[X\mid \mathcal{F}']$.

\paragraph{Stopping time} A sequence of sigma-algebras $\{\mathcal{F}_i\}_{i=0}^{\infty}$ with $\mathcal{F}_0\subseteq \mathcal{F}_1\subseteq\dots\subseteq\mathcal{F}$ is said to be a {\em filtration} in the probability space $(\Omega,\mathcal{F},\mathbb{P})$. A {\em stopping time} with respect to a filtration $\{\mathcal{F}_i\}_{i=0}^{\infty}$ is a random variable $T:\Omega\rightarrow \mathbb{N}_0\cup\{\infty\}$ such that, for every $i\in\mathbb{N}_0$, it holds that $\{\omega\in\Omega\mid T(\omega)\leq i\}\in \mathcal{F}_i$. Intuitively, $T$ returns the time step at which some stochastic process should be ``stopped'', and the fact that $\{\omega\in\Omega\mid T(\omega)\leq i\}\in \mathcal{F}_i$ imposes that the decision on stopping at the time step $i$ is made solely by using the information available in the first $i$ time steps of the process.

\paragraph{Ranking supermartingale} We are finally ready to formally define the mathematical notion of ranking supermartingales. Let $(\Omega,\mathcal{F},\mathbb{P})$ be a probability space and let $\eps\geq 0$. Suppose that $T$ is a stopping time with respect to a filtration $\{\mathcal{F}_i\}_{i=0}^{\infty}$. An {\em $\eps$-ranking supermartingale ($\eps$-RSM) with respect to $T$} is a stochastic process $(X_i)_{i=0}^{\infty}$ such that
\begin{itemize}
    \item $X_i$ is $\mathcal{F}_i$-measurable, for each $i\geq 0$,
    \item $X_i(\omega)\geq 0$, for each $i\geq 0$ and $\omega\in\Omega$, and
    \item $\mathbb{E}[X_{i+1} \mid \mathcal{F}_i](\omega) \leq X_i(\omega) - \eps\cdot\mathbb{I}_{T>i}(\omega)$, for each $i\geq 0$ and $\omega\in\Omega$.
\end{itemize}
The name comes from the connection with {\em supermartingale processes}~\cite{Williams91}. A supermartingale with respect to a given filtration $\{\mathcal{F}_i\}_{i=0}^{\infty}$ is a stochastic process $(X_i)_{i=0}^{\infty}$ which satisfies conditions~1 and~3 above with $\eps=0$ (note, since $\eps=0$ we define supermartingales with respect to the filtration and not the stopping time). Hence, RSMs are a sublcass of supermartingales.

To conclude this overview of martingale theory, we state two results on RSMs that we will use in our proofs. The first is a result on ranking supermartingales, whose variant was presented in works on termination analysis of probabilistic programs~\cite{FioritiH15,ChatterjeeFNH16}. Since our variant slightly differs from the ones presented in those works, we provide the proof for the sake of completeness (while previous variants consider the special case of stopping times defined by the first time after which the value of an RSM falls below some given threshold, our variant considers general stopping times).

\begin{proposition}\label{prop:rsm}
Let $(\Omega,\mathcal{F},\mathbb{P})$ be a probability space, let $(\mathcal{F}_i)_{i=0}^\infty$ be a filtration and let $T$ be a stopping time with respect to $(\mathcal{F}_i)_{i=0}^\infty$. Suppose that $(X_i)_{i=0}^{\infty}$ is an $\eps$-RSM with respect to $T$, for some $\eps>0$. Then
\begin{enumerate}
    \item $\mathbb{P}[T<\infty] = 1$,
    \item $\mathbb{E}[T] \leq \frac{\mathbb{E}[X_0]}{\eps}$, and
    \item $\mathbb{P}[ T\geq t] \leq \frac{\mathbb{E}[X_0]}{\eps\cdot t}$, for each $t\in\mathbb{N}$.
\end{enumerate}
\end{proposition}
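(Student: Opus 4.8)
The plan is to package all three statements into a single supermartingale and then read off the bounds. Define the auxiliary process $Y_i = X_i + \eps\cdot\min(T,i)$ for $i\ge 0$. First I would verify that $(Y_i)_{i=0}^\infty$ is a nonnegative supermartingale with respect to $(\mathcal{F}_i)_{i=0}^\infty$. Nonnegativity is clear since $X_i\ge 0$ and $\min(T,i)\ge 0$. For the supermartingale inequality I use that $T$ is a stopping time, so that $\{T>i\}=\Omega\setminus\{T\le i\}\in\mathcal{F}_i$ and $\min(T,i)$ is $\mathcal{F}_i$-measurable, together with the pointwise identity $\min(T,i+1)=\min(T,i)+\mathbb{I}_{T>i}$. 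Pulling the $\mathcal{F}_i$-measurable terms out of the conditional expectation and applying the defining $\eps$-ranking inequality gives
\[
\mathbb{E}[Y_{i+1}\mid\mathcal{F}_i] = \mathbb{E}[X_{i+1}\mid\mathcal{F}_i] + \eps\cdot\min(T,i) + \eps\cdot\mathbb{I}_{T>i} \le X_i + \eps\cdot\min(T,i) = Y_i.
\]
The key point of the construction is that the compensator $\eps\cdot\min(T,i)$ exactly absorbs the decrement $-\eps\cdot\mathbb{I}_{T>i}$, converting the ranking condition into an ordinary supermartingale.

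With this in hand the three claims follow quickly. For claim~1, the supermartingale convergence theorem applied to the nonnegative supermartingale $(Y_i)$ shows that $Y_i$ converges almost surely to an almost-surely finite limit. On the event $\{T=\infty\}$ we have $\min(T,i)=i$, hence $Y_i\ge\eps\cdot i\to\infty$; since this contradicts the almost-sure finiteness of the limit, we conclude $\mathbb{P}[T=\infty]=0$, i.e.\ $\mathbb{P}[T<\infty]=1$. For claim~2, taking expectations in the supermartingale inequality and using $Y_0=X_0$ (because $\min(T,0)=0$) yields $\mathbb{E}[Y_i]\le\mathbb{E}[X_0]$ for all $i$, and since $X_i\ge 0$ this gives $\eps\cdot\mathbb{E}[\min(T,i)]\le\mathbb{E}[X_0]$. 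As $\min(T,i)\uparrow T$, the Monotone Convergence Theorem yields $\mathbb{E}[T]\le\mathbb{E}[X_0]/\eps$.

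Claim~3 is then immediate from claim~2 by Markov's inequality applied to the nonnegative random variable $T$: for any $t\in\mathbb{N}$, $\mathbb{P}[T\ge t]\le \mathbb{E}[T]/t\le\mathbb{E}[X_0]/(\eps\cdot t)$. I expect the only delicate step to be the verification that $(Y_i)$ is a supermartingale: one must confirm the measurability of $\min(T,i)$ from the stopping-time property, justify extracting the $\mathcal{F}_i$-measurable terms from the conditional expectation, and keep track of integrability so that the conditional expectations (which a priori exist only because the $X_i$ are nonnegative) and the subsequent exchange of limit and expectation are all legitimate. Everything after that construction is routine.
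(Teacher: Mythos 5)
Your proof is correct, but it takes a genuinely different route from the paper's. The paper argues directly on unconditional expectations: by induction it establishes $\mathbb{E}[X_i] \leq \mathbb{E}[X_0] - \eps\sum_{j=0}^{i-1}\mathbb{P}[T>j]$, then uses nonnegativity of $X_i$ to conclude $\sum_{j=0}^{\infty}\mathbb{P}[T>j] \leq \mathbb{E}[X_0]/\eps < \infty$, from which all three claims drop out (convergence of the tail series gives claim~1, the identity $\mathbb{E}[T]=\sum_t \mathbb{P}[T>t]$ gives claim~2, and Markov gives claim~3). You instead build the compensated process $Y_i = X_i + \eps\cdot\min(T,i)$, verify it is a supermartingale, and invoke the supermartingale convergence theorem plus monotone convergence. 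Notably, your auxiliary process is \emph{exactly} the one the paper constructs later, in the proof of Theorem~2 part~3, to apply Azuma's inequality; so your route has the appeal of unifying the expected-time bound and the concentration bound through a single construction, at the cost of invoking a heavier tool (Doob's convergence theorem) where the paper stays elementary and self-contained. Two small remarks: first, your use of the convergence theorem for claim~1 is actually dispensable, since your claim~2 already gives $\mathbb{E}[T]<\infty$ and hence $\mathbb{P}[T<\infty]=1$; second, both your argument and the paper's implicitly assume $\mathbb{E}[X_0]<\infty$ (needed for ``$<\infty$'' in the paper's tail-sum bound, and for $L^1$-boundedness of $(Y_i)$ in yours) --- this is harmless in the intended application, where $X_0 = V(\mathbf{x}_0)$ is a deterministic constant, and you at least flag the integrability issue explicitly.
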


\begin{proof}
We first prove by induction on $i$ that, for each $i\in\mathbb{N}$,
\begin{equation}\label{eq:ineq}
    \mathbb{E}[X_i] \leq \mathbb{E}[X_0] - \eps \cdot \sum_{j=0}^{i-1}\mathbb{P}[T>j].
\end{equation}
The base case $i=1$ follows immediately from the definition of an $\eps$-RSM. We now suppose that the claim is true for $i$, and we prove it for $i+1$:
\begin{equation*}
\begin{split}
    \mathbb{E}[X_{i+1}] &= \mathbb{E}\Big[\mathbb{E}[X_{i+1}\mid \mathcal{F}_i]\Big] \leq \mathbb{E}[X_i] - \eps\cdot\mathbb{P}[T>i] \\
    &\leq \mathbb{E}[X_0] - \eps\cdot\sum_{j=0}^i\mathbb{P}[T>i],
\end{split}
\end{equation*}
where the first inequality holds since $(X_i)_{i=0}^{\infty}$ is an $\eps$-RSM with respect to $T$, and the second inequality holds by the induction hypothesis. Hence, the claim follows.

But we know that $X_i(\omega)\geq 0$ for each $i,\omega$ by the definition of an $\eps$-RSM, hence by plugging $\mathbb{E}[X_i]\geq 0$ into eq.~(\ref{eq:ineq}) we conclude that 
$0 \leq \mathbb{E}[X_0] - \eps \cdot \sum_{j=0}^{i-1}\mathbb{P}[T>j]$ for each $i\in \mathbb{N}$, and thus
\begin{equation*}
    \sum_{j=0}^{\infty}\mathbb{P}[T>j] \leq \frac{\mathbb{E}[X_0]}{\eps} < \infty.
\end{equation*}
It then follows that:
\begin{enumerate}
    \item $\mathbb{P}[T=\infty] = \lim_{t\rightarrow\infty}\mathbb{P}[T>t]=0$, as $\sum_{j=0}^{\infty}\mathbb{P}[T>j]$ converges,
    \item $\mathbb{E}[T] = \sum_{t=0}^{\infty}\mathbb{P}[T>t] \leq \frac{\mathbb{E}[X_0]}{\eps}$, and
    \item $\mathbb{P}[T\geq t]\leq \frac{\mathbb{E}[T]}{t} \leq \frac{\mathbb{E}[X_0]}{\eps\cdot t}$, where the first inequality follows by Markov's inequality.
\end{enumerate}
Hence, the proposition claim follows.
\end{proof}

The second is a classical result on supermartingales (and therefore RSMs) called Azuma's inequality~\cite{azuma1967weighted}, that we will later use in proving the concentration bounds on the stabilization time.

\begin{proposition}[Azuma's inequality]\label{prop:azuma}
Let $(\Omega,\mathcal{F},\mathbb{P})$ be a probability space and let $(\mathcal{F}_i)_{i=0}^\infty$ be a filtration. Suppose that $(X_i)_{i=0}^{\infty}$ is a supermartingale with respect to $(\mathcal{F}_i)_{i=0}^\infty$, and let $(c_i)_{i=0}^\infty$ be a sequence of positive real numbers such that $|X_{i+1}(\omega)-X_i(\omega)|\leq c_i$ for each $i\geq 0$ and $\omega\in\Omega$. Then, for each $n \in \mathbb{N}$ and $t > 0$, we have that
\begin{equation}
    \mathbb{P}\Big[ X_n - X_0 \geq t \Big] \leq e^{\frac{-t^2}{2 \cdot \sum_{i=0}^{n-1} c_i^2}}.
\end{equation}
\end{proposition}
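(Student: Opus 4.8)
The plan is to follow the classical Chernoff-bound (exponential moment) route to Azuma's inequality, adapted to the supermartingale rather than the martingale setting. First I would fix an arbitrary $\lambda > 0$ and apply the exponential Markov inequality to the nonnegative random variable $e^{\lambda(X_n - X_0)}$, giving
\[ \mathbb{P}[ X_n - X_0 \geq t ] \leq e^{-\lambda t}\cdot \mathbb{E}\big[ e^{\lambda(X_n - X_0)} \big]. \]
The problem then reduces to bounding the moment generating function $\mathbb{E}[e^{\lambda(X_n - X_0)}]$, which I would handle by peeling off the increments one at a time using the tower property of conditional expectation.

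Writing $D_i = X_{i+1} - X_i$ for the increments, so that $X_n - X_0 = \sum_{i=0}^{n-1} D_i$ with $|D_i| \leq c_i$, I would condition on $\mathcal{F}_{n-1}$. Since $e^{\lambda(X_{n-1} - X_0)}$ is $\mathcal{F}_{n-1}$-measurable it factors out of the inner conditional expectation, leaving
\[ \mathbb{E}\big[ e^{\lambda(X_n - X_0)} \big] = \mathbb{E}\Big[ e^{\lambda(X_{n-1} - X_0)}\cdot \mathbb{E}\big[ e^{\lambda D_{n-1}} \mid \mathcal{F}_{n-1} \big] \Big]. \]
The crux is a conditional Hoeffding-type estimate asserting that $\mathbb{E}[ e^{\lambda D_{n-1}} \mid \mathcal{F}_{n-1}] \leq e^{\lambda^2 c_{n-1}^2 / 2}$ almost surely. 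Granting this, substituting it and iterating the identical conditioning step downward through $\mathcal{F}_{n-2}, \dots, \mathcal{F}_0$ (formally a downward induction on the number of remaining increments) yields $\mathbb{E}[e^{\lambda(X_n - X_0)}] \leq e^{\lambda^2 \sum_{i=0}^{n-1} c_i^2 / 2}$.

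The main obstacle is establishing this conditional bound, and in particular correctly exploiting that $(X_i)_{i=0}^{\infty}$ is only a supermartingale, so the conditional mean of each increment is nonpositive rather than exactly zero. I would prove it by convexity: since $y \mapsto e^{\lambda y}$ is convex, on $[-c, c]$ it lies below the chord through its endpoints, i.e.\ $e^{\lambda y} \leq \frac{c - y}{2c}e^{-\lambda c} + \frac{c + y}{2c}e^{\lambda c}$ for $|y| \leq c$. Taking conditional expectation with $c = c_{n-1}$ and using $|D_{n-1}| \leq c_{n-1}$ together with $\mathbb{E}[D_{n-1} \mid \mathcal{F}_{n-1}] \leq 0$, the linear-in-$y$ term contributes a nonpositive multiple of $\sinh(\lambda c_{n-1})$, so the bound collapses to $\cosh(\lambda c_{n-1})$; the elementary inequality $\cosh(s) \leq e^{s^2/2}$ then finishes it. It is essential here that $\lambda > 0$, which fixes the sign of the coefficient multiplying the nonpositive conditional mean.

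Finally I would optimize over the free parameter. Combining the two bounds gives $\mathbb{P}[X_n - X_0 \geq t] \leq e^{-\lambda t + \lambda^2 S / 2}$ with $S = \sum_{i=0}^{n-1} c_i^2$; minimizing the exponent over $\lambda > 0$, attained at $\lambda = t / S$, produces exactly the claimed bound $e^{-t^2 / (2\sum_{i=0}^{n-1} c_i^2)}$.
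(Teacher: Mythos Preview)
Your proof is correct and is precisely the standard Chernoff--Hoeffding argument for Azuma's inequality in the supermartingale case; the handling of the nonpositive conditional mean via the sign of $\sinh(\lambda c_{n-1})$ for $\lambda>0$ is the right adaptation. Note, however, that the paper does not actually prove this proposition: it is stated as a classical result with a citation to \cite{azuma1967weighted} and used as a black box in the proof of Theorem~2, so there is no ``paper's own proof'' to compare against---what you have written is essentially the textbook derivation that the citation points to.
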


\subsection{Theorem Proofs}\label{sec:thmproofs}

We now proceed to proving Theorem~1 and Theorem~2 from the main text of the paper. Recall that, in the preliminaries, we noted that for each initial state $\mathbf{x}_0\in\mathcal{X}$, the system dynamics induces a Markov process which gives rise to the probability space over the set of all system trajectories that start in the initial state $\mathbf{x}_0$~\cite[Section 2]{Puterman94}. Denote this probability space by $(\Omega_{\mathbf{x}_0},\mathcal{F}_{\mathbf{x}_0},\mathbb{P}_{\mathbf{x}_0})$. The idea behind the proofs of both theorems is to show that any RSM for the stabilizing set $\Stable$ gives rise to a mathematical RSM in the probability space $(\Omega_{\mathbf{x}_0},\mathcal{F}_{\mathbf{x}_0},\mathbb{P}_{\mathbf{x}_0})$. We then use Proposition~\ref{prop:rsm} to deduce a.s.~asymptotic stability in Theorem~1, and Proposition~\ref{prop:rsm} and Proposition~\ref{prop:azuma} to deduce bounds on the stabilization time in Theorem~2. But in order to formally show that an RSM can be instantiated as a mathematical object in this probability space, we first need to define the canonical filtration in this probability space. This will allow defining the stabilization time as a stopping time with respect to the canonical filtration, and finally instantiating the RSM in our stochastic dynamical system as a mathematical RSM with respect to this stopping time.

The rest of this section formalizes the intuition behind the proofs that was outlined above.

\paragraph{Canonical filtration} Fix an initial state $\mathbf{x}_0\in\mathcal{X}$ and consider the probability space $(\Omega_{\mathbf{x}_0},\mathcal{F}_{\mathbf{x}_0},\mathbb{P}_{\mathbf{x}_0})$. For each $i\in\mathbb{N}_0$, define $\mathcal{F}_i\subseteq \mathcal{F}$ to be the $\sigma$-algebra containing the subsets of $\Omega_{\mathbf{x}_0}$ that, intuitively, contain all trajectories in $\Omega_{\mathbf{x}_0}$ whose first $i$ states satisfy some specified property.

Formally, we define $\mathcal{F}_i$ as follows. For each $j\in\mathbb{N}_0$, let $C_j:\Omega_{\mathbf{x}_0}\rightarrow \mathcal{X}$ be a map which to each trajectory $\rho=(\mathbf{x}_t,\mathbf{u}_t,\omega_t)_{t\in\mathbb{N}_0}\in\Omega_{\mathbf{x}_0}$ assigns the $j$-th state $\mathbf{x}_j$ along the trajectory. Then $\mathcal{F}_i$ is the smallest $\sigma$-algebra over $\Omega_{\mathbf{x}_0}$ with respect to which $C_0, C_1, \dots, C_i$ are all measurable, where $\mathcal{X}\subseteq\mathbb{R}^m$ is equipped with the induced Borel-$\sigma$-algebra~\cite[Section 1]{Williams91}. Clearly $\mathcal{F}_0\subseteq\mathcal{F}_1\subseteq\dots$. We say that the sequence of $\sigma$-algebras $(\mathcal{F}_i)_{i=0}^\infty$ is the {\em canonical filtration} in the probability space $(\Omega_{\mathbf{x}_0},\mathcal{F}_{\mathbf{x}_0},\mathbb{P}_{\mathbf{x}_0})$.

\paragraph{Stabilization stopping time} Recall, we use $\Stable\subseteq\mathcal{X}$ to denote the region whose stability we wish to analyze. In order to formally reason about its a.s.~asymptotic stability as well as the stabilization time, we formalize the notion of the stabilization stopping time. Define $T_{\Stable}:\Omega_{\mathbf{x}_0}\rightarrow\mathbb{N}_0\cup\{\infty\}$ to be the first hitting time of the set $\Stable$. Since we assumed that $\Stable$ is closed under system dynamics, for each trajectory $\rho\in\Omega_{\mathbf{x}_0}$, $T_{\Stable}(\rho)$ is exactly the stabilization time of the trajectory. Since whether $T_{\Stable}(\rho)\leq i$ depends solely on the first $i$ states along $\rho$, we clearly have $\{\rho\in\Omega_{\mathbf{x_0}}\mid T(\rho)\leq i\}\in \mathcal{F}_i$ for each $i$ and so $T_{\Stable}$ is a stopping time with respect to $(\mathcal{F}_i)_{i=0}^\infty$. We call $T_{\Stable}$ the {\em stabilization stopping time}.

We now prove the theorems.

\begin{manualtheorem}{1}
Let $f:\mathcal{X}\times\mathcal{U} \times \mathcal{N}\rightarrow \mathcal{X}$ be a Lipschitz continuous dynamics function, $\pi:\mathcal{X}\rightarrow\mathcal{X}$ a Lipschitz continuous policy and $d$ a distribution over $\mathcal{N}$. Suppose that $\mathcal{X}$ is compact and let $\Stable\subseteq\mathcal{X}$ be closed under system dynamics and have a non-empty interior. Suppose that there exists an RSM $V:\mathcal{X}\rightarrow\mathbb{R}$ for $\Stable$. Then $\Stable$ is a.s.~asymptotically stable.
\end{manualtheorem}

\begin{proof}
In order to prove that $\Stable$ is a.s.~asymptotically stable as in Definition~1 we need to show that, for each $\mathbf{x}_0\in\mathcal{X}$,
    \[ \mathbb{P}_{\mathbf{x}_0}\Big[ \lim_{t\rightarrow\infty}d(\mathbf{x}_t,\Stable) = 0 \Big] = 1. \]

Let $\mathbf{x}_0\in\mathcal{X}$. If $\mathbf{x}_0\in\Stable$, then the claim trivially holds since $\Stable$ is closed under system dynamics. Thus suppose without loss of generality that $\mathbf{x}_0\not\in\Stable$, and consider the probability space $(\Omega_{\mathbf{x}_0},\mathcal{F}_{\mathbf{x}_0},\mathbb{P}_{\mathbf{x}_0})$, the canonical filtration $(\mathcal{F}_i)_{i=0}^\infty$ and the stabilziation stopping time $T_{\Stable}$ in it.

Now, we define a stochastic process $(X_i)_{i=0}^\infty$ in $(\Omega_{\mathbf{x}_0},\mathcal{F}_{\mathbf{x}_0},\mathbb{P}_{\mathbf{x}_0})$ via
\begin{equation*}
    X_i(\rho) = \begin{cases}
    V(\mathbf{x}_i), &\text{if } i<T_{\Stable}(\rho) \\
    V(\mathbf{x}_{T_{\Stable}(\rho)}), &\text{otherwise}
    \end{cases}
\end{equation*}
for each $i\geq 0$ and $\rho=(\mathbf{x}_t,\mathbf{u}_t,\omega_t)_{t\in\mathbb{N}_0}\in\Omega_{\mathbf{x}_0}$. Hence, if the stopping time $T_{\Stable}$ is not exceeded by time $i$ we define $X_i$ to be equal to the value of $V$ at the $i$-th state along the trajectory, and after $T_{\Stable}$ is exceeded we define $X_i$ to be equal to the value of $V$ at the time step $T_{\Stable}$ at which the process is stopped.

We claim that $(X_i)_{i=0}^\infty$ is an $\eps$-RSM with respect to the stabilization stopping time $T_{\Stable}$. To prove this claim, we check each of the three defining properties of $\eps$-RSMs:
\begin{itemize}
    \item {\em Each $X_i$ is $\mathcal{F}_i$-measurable.} The value of $X_i$ is defined in terms of the first $i$ states along a trajectory if $T_{\Stable}>i$, and in terms of the first $T_{\Stable}$ states if $i\geq T_{\Stable}$. By the definition of the canonical filtration, we have that $X_i$ is $\mathcal{F}_i$-measurable for each $i\geq 0$.
    \item {\em Each $X_i(\rho)\geq 0$.} Since each $X_i$ is defined in terms of $V$ and since we know that $V(\mathbf{x})\geq 0$ for each state $\mathbf{x}\in\mathcal{X}$, it follows that $X_i(\rho)\geq 0$ for each $i\geq 0$ and $\rho\in\Omega_{\mathbf{x}_0}$.
    \item {\em Each $\mathbb{E}[X_{i+1} \mid \mathcal{F}_i](\rho) \leq X_i(\rho) - \eps\cdot \mathbb{I}_{T_{\Stable}>i}(\rho)$.} First, note that the conditional expectation exists since $X_{i+1}$ is nonnegative for each $i\geq 0$. In order to prove the inequality, we distinguish two cases.
    
    First, if $T_{\Stable}(\rho)>i$, we need to show that  $\mathbb{E}[X_{i+1} \mid \mathcal{F}_i](\rho) \leq X_i(\rho) - \eps$. Let $\rho=(\mathbf{x}_t,\mathbf{u}_t,\omega_t)_{t\in\mathbb{N}_0}$. We have that $X_i(\rho)=V(\mathbf{x}_i)$. On the other hand, we have $\mathbb{E}[X_{i+1} \mid \mathcal{F}_i](\rho) = \mathbb{E}_{\omega\sim d}[V(f(\mathbf{x}_i,\pi(\mathbf{x}_i),\omega)]$. To see this, observe that $\mathbb{E}_{\omega\sim d}[V(f(\mathbf{x}_i,\pi(\mathbf{x}_i),\omega)]$ satisfies all the defining properties of conditional expectation since it is the expected value of $V$ at a subsequent state of $\mathbf{x}_i$, and recall that conditional expectation is a.s.~unique whenever it exists. We thus have
    \begin{equation*}
    \begin{split}
        \mathbb{E}[X_{i+1} \mid \mathcal{F}_i](\rho) &= \mathbb{E}_{\omega\sim d}[V(f(\mathbf{x}_i,\pi(\mathbf{x}_i),\omega)] \\
                                                     &\leq V(\mathbf{x}_i) - \eps = X_i(\rho) - \eps,
    \end{split}
    \end{equation*}
    where the inequality holds since $V$ is an RSM for $\Stable$ and $\mathbf{x}_i\not\in\Stable$ (as $T_{\Stable}(\rho)>i$). This proves the claim.
    
    Second, if $T_{\Stable}(\rho)\leq i$, we need to show that $\mathbb{E}[X_{i+1} \mid \mathcal{F}_i](\rho) \leq X_i(\rho)$. Let $\rho=(\mathbf{x}_t,\mathbf{u}_t,\omega_t)_{t\in\mathbb{N}_0}$. We have $X_i(\rho)=V(\mathbf{x}_{T_{\Stable}(\rho)})$ and $\mathbb{E}[X_{i+1} \mid \mathcal{F}_i](\rho)] = V(\mathbf{x}_{T_{\Stable}(\rho)})$, so the equality follows.
\end{itemize}
This concludes the proof that $(X_i)_{i=0}^\infty$ is an $\eps$-RSM with respect to $T_{\Stable}$.

Therefore, by the first part of Proposition~1, we have that $\mathbb{P}_{\mathbf{x}_0}[T_{\Stable}<\infty]=1$ which by the definition of the stabilization stopping time implies $\mathbb{P}_{\mathbf{x}_0}[ \lim_{t\rightarrow\infty}d(\mathbf{x}_t,\Stable) = 0] = 1$. Since the initial state $\mathbf{x}_0$ was arbitrary, the first point in the definition of a.s.~asymptotic stability is proved.
\end{proof}

\begin{manualtheorem}{2}
Let $f:\mathcal{X}\times\mathcal{U} \times \mathcal{N}\rightarrow \mathcal{X}$ be a Lipschitz continuous dynamics function, $\pi:\mathcal{X}\rightarrow\mathcal{U}$ a Lipschitz continuous policy and $d$ a distribution over $\mathcal{N}$. Suppose that $\mathcal{X}$ is compact and let $\Stable\subseteq\mathcal{X}$ be closed under system dynamics and have a non-empty interior. Suppose that there exists an $\eps$-RSM $V:\mathcal{X}\rightarrow\mathbb{R}$ for $\Stable$. Then, for any initial state $\mathbf{x}_0\in\mathcal{X}$,
\begin{enumerate}
    \item $\mathbb{E}_{\mathbf{x}_0}[ T_{\Stable} ] \leq \frac{V(\mathbf{x}_0)}{\eps}$.
    \item $\mathbb{P}_{\mathbf{x}_0}[ T_{\Stable} \geq t ] \leq \frac{V(\mathbf{x}_0)}{\eps \cdot t}$, for any time $t\in\mathbb{N}$.
    \item If the system has {\em $c$-bounded differences} for $c>0$, then $\mathbb{P}_{\mathbf{x}_0}[ T_{\Stable} \geq t ] \leq A\cdot e^{-t\cdot \eps^2/(2\cdot (c+\eps)^2)}$ for any time $t\in\mathbb{N}$ and $A=e^{\eps\cdot V(\mathbf{x}_0)/(c+\eps)^2}$.
\end{enumerate}
\end{manualtheorem}

\begin{proof}
In the proof of Theorem~1, we showed that $(X_i)_{i=0}^\infty$ is an $\eps$-RSM with respect to $T_{\Stable}$. The first two parts of Theorem~2 then follow from the second and the third part of Proposition~1.

The proof of the third part of Theorem~2 is similar to the argument in~\cite[Section~5.1.2]{ChatterjeeFNH16} which derives concentration bounds on the termination time in probabilistic programs. We define another stochastic process $(Y_i)_{i=0}^\infty$ as follows:
\begin{equation*}
    Y_i(\rho) = X_i(\rho) + \eps\cdot\min\{i,T_{\Stable}(\rho)\}.
\end{equation*}
We claim that $(Y_i)_{i=0}^\infty$ is a supermartingale with respect to the canonical filtration $(\mathcal{F}_i)_{i=0}^\infty$. By applying Azuma's inequality to this newly constructed supermartingale, we will then deduce claim of Theorem~3.

To prove this claim, note that each $Y_i$ is $\mathcal{F}_i$-measurable and nonnegative clearly hold, so we just need to check that $\mathbb{E}[Y_{i+1}\mid\mathcal{F}_i](\rho)\leq Y_i(\rho)$ for each $i\geq 0$ and $\rho\in\Omega_{\mathbf{x}_0}$. To prove this, observe that
\begin{equation}\label{eq:seq}
\begin{split}
    &\mathbb{E}[Y_{i+1}\mid\mathcal{F}_i](\rho) = \mathbb{E}\Big[X_{i+1} + \eps\cdot\min\{i+1,T_{\Stable}\}\mid\mathcal{F}_i\Big](\rho) \\
    &= \mathbb{E}\Big[X_{i+1} \mid\mathcal{F}_i\Big](\rho) + \eps\cdot\mathbb{E}\Big[\min\{i+1,T_{\Stable}\}\mid\mathcal{F}_i\Big](\rho) \\
    &\leq X_i(\rho) - \eps\cdot\mathbb{I}_{T_{\Stable}>i}(\rho) + \eps\cdot \mathbb{E}[\min\{i+1,T_{\Stable}\}\mid\mathcal{F}_i](\rho)
\end{split}
\end{equation}
Now, we distinguish two cases.
\begin{itemize}
    \item If $T_{\Stable}(\rho)>i$, then $\min\{i+1,T_{\Stable}\}=i+1$ and so $\mathbb{E}[\min\{i+1,T_{\Stable}\}\mid\mathcal{F}_i](\rho)=i+1$. Then, continuing on the right-hand-side of eq.~(\ref{eq:seq}), we have
    \begin{equation*}
    \begin{split}
    &\mathbb{E}[Y_{i+1}\mid\mathcal{F}_i](\rho) = \mathbb{E}\Big[X_{i+1} + \eps\cdot\min\{i+1,T_{\Stable}\}\mid\mathcal{F}_i\Big](\rho) \\
    &\leq X_i(\rho) - \eps\cdot\mathbb{I}_{T_{\Stable}>i}(\rho) + \eps\cdot \mathbb{E}[\min\{i+1,T_{\Stable}\}\mid\mathcal{F}_i](\rho) \\
    &= X_i(\rho) - \eps + \eps\cdot (i+1) \\
    &= X_i(\rho) - \eps\cdot i =Y_i(\rho)
    \end{split}
    \end{equation*}
    
    \item If $T_{\Stable}(\rho)\leq i$, then $\min\{i+1,T_{\Stable}\}(\rho)=T_{\Stable}(\rho) = T_{\Stable}(\rho) \cdot \mathbb{I}_{T_{\Stable}\leq i}(\rho)$. But the random variable $T_{\Stable}\cdot \mathbb{I}_{T_{\Stable}\leq i}$ is $\mathcal{F}_i$-measurable, so by the properties of conditional expectation we have that $\mathbb{E}[T_{\Stable}\cdot \mathbb{I}_{T_{\Stable}\leq i}\mid\mathcal{F}_i] = T_{\Stable}\cdot \mathbb{I}_{T_{\Stable}\leq i}$. Plugging this back into the right-hand-side of eq.~(\ref{eq:seq}), we have
    \begin{equation*}
    \begin{split}
    &\mathbb{E}[Y_{i+1}\mid\mathcal{F}_i](\rho) = \mathbb{E}\Big[X_{i+1} + \eps\cdot\min\{i+1,T_{\Stable}\}\mid\mathcal{F}_i\Big](\rho) \\
    &\leq X_i(\rho) - \eps\cdot\mathbb{I}_{T_{\Stable}>i}(\rho) + \eps\cdot \mathbb{E}[\min\{i+1,T_{\Stable}\}\mid\mathcal{F}_i](\rho) \\
    &= X_i(\rho) - 0 + \eps\cdot T_{\Stable}(\rho) = Y_i(\rho).
    \end{split}
    \end{equation*}
\end{itemize}
Hence, $(Y_i)_{i=0}^\infty$ is a supermartingale with respect to the canonical filtration $(\mathcal{F}_i)_{i=0}^\infty$. Moreover, note that $(Y_i)_{i=0}^\infty$ has $(c+\eps)$-bounded differences, as $(X_i)_{i=0}^\infty$ has $c$-bounded differneces.

Finally, the claim of Theorem~3 follows from the following sequence of inequalities
\begin{equation*}
\begin{split}
    &\mathbb{P}_{\mathbf{x}_0}\Big[ T_{\Stable} \geq t \Big] = \mathbb{P}_{\mathbf{x}_0}\Big[ X_t \geq 0 \land T_{\Stable} \geq t \Big] \\
    &=\mathbb{P}_{\mathbf{x}_0}\Big[ X_t + \eps\cdot\min\{t, T_{\Stable}\} - X_0 \geq \eps\cdot\min\{t, T_{\Stable}\} - X_0 \\
    &\hspace{2.5cm} \land T_{\Stable} \geq t \Big] \\
    &=\mathbb{P}_{\mathbf{x}_0}\Big[ X_t + \eps\cdot\min\{t, T_{\Stable}\} - X_0 \geq \eps\cdot t - X_0 \land T_{\Stable} \geq t \Big] \\
    &\leq \mathbb{P}_{\mathbf{x}_0}\Big[ X_t + \eps\cdot\min\{t, T_{\Stable}\} - X_0 \geq \eps\cdot t - X_0 \Big] \\
    &= \mathbb{P}_{\mathbf{x}_0}\Big[ Y_t - Y_0 \geq \eps\cdot t - X_0 \Big] \\
    &\leq e^{\frac{-(\eps \cdot t-X_0)^2}{2 \cdot \sum_{i=0}^{t-1} (c+\eps)^2}} = e^{\frac{-(\eps \cdot t-X_0)^2}{2 \cdot t\cdot (c+\eps)^2}} \\
    &= e^{\frac{-\eps^2 \cdot t}{2 \cdot (c+\eps)^2}} \cdot e^{\frac{\eps\cdot X_0}{(c+\eps)^2}} \cdot e^{\frac{-X_0^2}{2 \cdot t \cdot (c+\eps)^2}} \\
    &= e^{\frac{-\eps^2 \cdot t}{2 \cdot (c+\eps)^2}} \cdot e^{\frac{\eps \cdot V(\mathbf{x}_0)}{(c+\eps)^2}} \cdot e^{\frac{-V(\mathbf{x}_0)^2}{2 \cdot t \cdot (c+\eps)^2}} \\
    &\leq e^{\frac{-\eps^2 \cdot t}{2 \cdot (c+\eps)^2}} \cdot e^{\frac{\eps\cdot V(\mathbf{x}_0)}{(c+\eps)^2}} \cdot 1 \\
    &= A \cdot e^{-t\cdot \frac{\eps^2}{2 \cdot (c+\eps)^2}}
\end{split}
\end{equation*}
with $A=e^{\eps \cdot V(\mathbf{x}_0)/(c+\eps)^2}$, where in sixth row we applied Azuma's inequality to the supermartingale $(Y_i)_{i=0}^\infty$ and in the ninth row we use $e^{-V(\mathbf{x}_0)^2/(2 \cdot t \cdot (c+\eps)^2)}\leq 1$.
\end{proof}

\section{Proofs of Theorem~3 and Theorem~4}

\begin{manualtheorem}{3}
Suppose that the verifier in Algorithm~1 verifies that $V$ satisfies eq.~(3) for each $\mathbf{x}\in\tilde{\mathcal{X}}$. Let $-m\in\mathbb{R}$ be such that $V(\mathbf{x})\geq -m$ for each $\mathbf{x}\in\mathcal{X}$. Then, the function $V'(\mathbf{x})=V(\mathbf{x})+m$ is an RSM for $\Stable$. Hence, $\Stable$ is a.s.~asymptotically stable.
%, with $\eps>0$ as in the definition of RSMs given by
%\[ \eps = \min_{\mathbf{x}\in \tilde{\mathcal{X}}} \Big( V(\mathbf{x}) - \tau \cdot K - \mathbb{E}_{\omega\sim d}\Big[ V \Big( f(\mathbf{x}, \pi(\mathbf{x}), \omega) \Big) \Big] \Big). \]
\end{manualtheorem}

\begin{proof}
The function $V'$ is continuous since $V$ is continuous. Moreover, $V'$ is nonnegative since $V'(\mathbf{x}) = V(\mathbf{x})+m \geq -m+m = 0$ for each $\mathbf{x}\in\mathcal{X}$. Thus, we only need to show that $V'$ satisfies the expected decrease condition for each $\mathbf{x}\in\mathcal{X}\backslash\Stable$, i.e.~that there exists $\eps>0$ such that
\[ \mathbb{E}_{\omega\sim d}\Big[ V \Big( f(\mathbf{x}, \pi(\mathbf{x}), \omega) \Big) \Big] \leq V(\mathbf{x}) - \eps \]
for each $\mathbf{x}\in\mathcal{X}\backslash\Stable$.

We prove that $\eps>0$ defined via
\[ \eps = \min_{\mathbf{x}\in \tilde{\mathcal{X}}} \Big( V(\mathbf{x}) - \tau \cdot K - \mathbb{E}_{\omega\sim d}\Big[ V \Big( f(\mathbf{x}, \pi(\mathbf{x}), \omega) \Big) \Big] \Big) \]
satisfies the claim, where by theorem assumptions we know that $\eps$ is indeed strictly positive.

To show this, fix $\mathbf{x}\in\mathcal{X}\backslash\Stable$ and let $\tilde{\mathbf{x}}\in\tilde{\mathcal{X}}$ be such that $||\mathbf{x}-\tilde{\mathbf{x}}||_1 \leq \tau$. Such $\tilde{\mathbf{x}}$ exists by the definition of a discretization. Then, by Lipschitz continuity of $f$, $\pi$ and $V$, we have
\begin{equation}\label{eq:long}
\begin{split}
    &\mathbb{E}_{\omega\sim d}\Big[ V \Big( f(\mathbf{x}, \pi(\mathbf{x}), \omega) \Big) \Big] \\
    &\leq \mathbb{E}_{\omega\sim d}\Big[ V \Big( f(\tilde{\mathbf{x}}, \pi(\tilde{\mathbf{x}}), \omega) \Big) \Big] \\
    &\hspace{1cm}+ ||f(\tilde{\mathbf{x}}, \pi(\tilde{\mathbf{x}}), \omega) - f(\mathbf{x}, \pi(\mathbf{x}), \omega)||_1 \cdot L_V \\
    &\leq \mathbb{E}_{\omega\sim d}\Big[ V \Big( f(\tilde{\mathbf{x}}, \pi(\tilde{\mathbf{x}}), \omega) \Big) \Big] \\
    &\hspace{1cm}+ ||(\tilde{\mathbf{x}}, \pi(\tilde{\mathbf{x}}), \omega) - (\mathbf{x}, \pi(\mathbf{x}), \omega)||_1 \cdot L_V\cdot L_f \\
    &\leq \mathbb{E}_{\omega\sim d}\Big[ V \Big( f(\tilde{\mathbf{x}}, \pi(\tilde{\mathbf{x}}), \omega) \Big) \Big] \\
    &\hspace{1cm}+ ||\tilde{\mathbf{x}} - \mathbf{x}||_1 \cdot L_V\cdot L_f \cdot (1 + L_{\pi}) \\
    &\leq \mathbb{E}_{\omega\sim d}\Big[ V \Big( f(\tilde{\mathbf{x}}, \pi(\tilde{\mathbf{x}}), \omega) \Big) \Big] \\
    &\hspace{1cm}+ \tau \cdot L_V\cdot L_f \cdot (1 + L_{\pi}),
\end{split}
\end{equation}
where in the last row we used $||\mathbf{x}-\tilde{\mathbf{x}}||_1 < \tau$.
On the other hand, by Lipschitz continuity of $V$ we have
\begin{equation}\label{eq:short}
    V(\mathbf{x}) \geq V(\tilde{\mathbf{x}}) + ||\tilde{\mathbf{x}} - \mathbf{x}||_1 \cdot L_V \geq V(\tilde{\mathbf{x}}) - \tau\cdot L_V.
\end{equation}
Thus combining eq.(\ref{eq:long}) and (\ref{eq:short}) we get
\begin{equation}
\begin{split}
    &V(\mathbf{x}) - \mathbb{E}_{\omega\sim d}\Big[ V \Big( f(\mathbf{x}, \pi(\mathbf{x}), \omega) \Big) \Big] \\
    &\geq V(\tilde{\mathbf{x}}) - \tau\cdot L_V - \mathbb{E}_{\omega\sim d}\Big[ V \Big( f(\tilde{\mathbf{x}}, \pi(\tilde{\mathbf{x}}), \omega) \Big) \Big] \\
    &\hspace{1cm}- \tau \cdot L_V\cdot L_f \cdot (1 + L_{\pi}) \\
    &= V(\tilde{\mathbf{x}}) - \tau\cdot K - \mathbb{E}_{\omega\sim d}\Big[ V \Big( f(\tilde{\mathbf{x}}, \pi(\tilde{\mathbf{x}}), \omega) \Big) \Big] \\
    &\geq \eps
\end{split}
\end{equation}
where the equality in the second last row follows by the definition of $K$, and the inequality in the last row follows by our choice of $\eps$. This concludes the proof that $V'$ is an RSM for $\Stable$. The claim that $\Stable$ is a.s.~asymptotically stable then follows from Theorem~1.

\end{proof}

\begin{manualtheorem}{4}
Let $M = \min_{\mathbf{x}\in\tilde{\mathcal{X}}}|\mathcal{D}_{\mathbf{x}}|$. If $V_{\theta}$ satisfies eq.~(3) for each $\mathbf{x}\in\tilde{\mathcal{X}}$ and if $L_{V_{\theta}} \leq \delta / (\tau\cdot (L_f \cdot (L_\pi + 1) + 1))$, then $\lim_{M\rightarrow \infty} \mathcal{L}(\theta) = 0$ holds almost-surely.
\end{manualtheorem}

\begin{proof}
Since $L_{V_{\theta}} \leq \delta / (\tau\cdot (L_f \cdot (L_\pi + 1) + 1))$, we have that $\mathcal{L}_{\text{Lipschitz}}(\theta)=0$. Thus,
\begin{equation*}
\begin{split}
    &\mathcal{L}(\theta) = \mathcal{L}_{\text{RSM}}(\theta) \\
    &= \frac{1}{|\tilde{\mathcal{X}}|}\sum_{\mathbf{x}\in\tilde{\mathcal{X}}}\Big( \max\Big\{\sum_{\mathbf{x}'\in\mathcal{D}_{\mathbf{x}}}\frac{V_{\theta}(\mathbf{x}')}{|\mathcal{D}_{\mathbf{x}}|}-V_{\theta}(\mathbf{x}) + \tau \cdot K, 0\Big\} \Big).
\end{split}
\end{equation*}
Hence, it suffices to prove that for each $\mathbf{x}\in\tilde{\mathcal{X}}$ we almost-surely have
\[ \lim_{M\rightarrow\infty}\max\Big\{\sum_{\mathbf{x}'\in\mathcal{D}_{\mathbf{x}}}\frac{V_{\theta}(\mathbf{x}')}{|\mathcal{D}_{\mathbf{x}}|}-V_{\theta}(\mathbf{x}) + \tau \cdot K, 0\Big\} = 0. \]

To prove this, observe that $\{V_{\theta}(\mathbf{x}')\mid \mathbf{x}'\in \mathcal{D}_{\mathbf{x}} \}$ is a set of values of $V$ in at least $M$ independently sampled successor states of $\mathbf{x}$ according to the distribution over the successor states of $\mathbf{x}$ defined by the system dynamics and the probability distribution $d$ over disturbance vectors. Since $\mathcal{X}$ is compact and $V_{\theta}$ is continuous, the random value defined by the value of $V$ at a randomly sampled successor state of $\mathbf{x}$ is bounded, thus has a well-defined and finite first moment.

The Strong Law of Large Numbers~\cite[Section 12.10]{Williams91} states that, given a distribution $\mu$ with a finite first moment and a sequence $X_1,X_2,\dots$ of independent identically distributed random variables distributed according to $\mu$, we have that
\[ \lim_{n\rightarrow\infty}\frac{X_1+\dots+X_n}{n} = \mathbb{E}_{X\sim\mu}[X] \]
holds almost-surely.

Applying the Strong Law of Large Numbers to $\{V_{\theta}(\mathbf{x}')\mid \mathbf{x}'\in \mathcal{D}_{\mathbf{x}} \}$ we conclude that, almost-surely,
\begin{equation*}
\begin{split}
    &\lim_{M\rightarrow\infty}\max\Big\{\sum_{\mathbf{x}'\in\mathcal{D}_{\mathbf{x}}}\frac{V_{\theta}(\mathbf{x}')}{|\mathcal{D}_{\mathbf{x}}|}-V_{\theta}(\mathbf{x}) + \tau \cdot K, 0\Big\}  \\
    &= \max\Big\{\lim_{M\rightarrow\infty}\sum_{\mathbf{x}'\in\mathcal{D}_{\mathbf{x}}}\frac{V_{\theta}(\mathbf{x}')}{|\mathcal{D}_{\mathbf{x}}|}-V_{\theta}(\mathbf{x}) + \tau \cdot K, 0\Big\} \\
    &= \max\Big\{\mathbb{E}_{\omega\sim d}\Big[ V \Big( f(\mathbf{x}, \pi(\mathbf{x}), \omega) \Big) \Big]-V_{\theta}(\mathbf{x}) + \tau \cdot K, 0\Big\} \\
    &= 0,
\end{split}
\end{equation*}
where the first equality holds since limits may be interchanged with the maximum function over finitely many arguments, the second equality holds almost-surely by the Strong Law of Large Numbers, and the third inequality holds since $V_{\theta}$ satisfies eq.~(3) for each $\mathbf{x}\in\tilde{\mathcal{X}}$.

Hence, $\mathcal{L}_{\text{RSM}}(\theta)=0$, as claimed.

\end{proof}

\section{Experimental evaluation details}
We implemented our algorithm in JAX \cite{jax2018github}. All experiments where run on a 4 CPU-core machine with 32GB of memory and an Nvidia GTX 1080 Ti with 11GB of memory. The full source code to reproduce all our experiments is provided in the Supplementary Material.

\subsection{Benchmark environments}
The dynamics of our first task (2D system) is defined as
\begin{equation}
     \mathbf{x}_{t+1} = \begin{pmatrix} 1 & 0.045 \\ 0 & 0.9 \end{pmatrix} \mathbf{x}_t + \begin{pmatrix} 0.45 \\0.5 \end{pmatrix} g(\mathbf{u}_t) + \begin{pmatrix}0.015 & 0 \\0 & 0.005 \end{pmatrix}  \omega, 
\end{equation}
where $\omega$ is a disturbance vector and $\omega[1],\omega[2] \sim \text{Triangular}$. Here, we use square brackets to denote the coordinate index, e.g.~$\omega[1]$ denotes the first coordinate of the two-dimensional disturbance vector $\omega\in\mathbb{R}^2$. The probability density function of $\text{Triangular}$ is defined by 
\begin{equation}
    \text{Triangular}(x) := \begin{cases} 0 & \text{if } x< -1\\ 1 - |x| & \text{if } -1 \leq x \leq 1\\ 0 & \text{otherwise}\end{cases}.
\end{equation}
The function $g$ bounds the range of admissible actions by $g(u) = \max(\min(u,1),-1)$.
For training a policy on the 2D system, we used a reward $r_t$ at time $t$ defined by $r_t := 1 - \mathbf{x}_t[1]^2 - \mathbf{x}_t[2]^2$.

The dynamics function of the inverted pendulum task is defined as 
\begin{align*}
    \mathbf{x}_{t+1}[2] &:= (1-b)  \mathbf{x}_{t}[2] \\
    &+ \Delta \cdot \big( \frac{-1.5 \cdot G \cdot \text{sin}(\mathbf{x}_{t}[1]+\pi)}{2l} + \frac{3}{m l^2} 2g(\mathbf{u}_t)\big)\\
    &+ 0.002 \omega[1]\\
    \mathbf{x}_{t+1}[1] &:=  \mathbf{x}_{t}[1] + \Delta \cdot \mathbf{x}_{t+1}[2] + 0.005 \omega[2],
\end{align*}
where the parameters $\delta, G, m, l, b$ are defined in Table \ref{tab:invpend}.
For training a policy on the inverted pendulum task, we used a reward $r_t$ at time $t$ defined by $r_t := 1 - \mathbf{x}_{t}[1]^2 - 0.1 \mathbf{x}_{t}[2]^2$.

\begin{table}[]
    \centering
    \begin{tabular}{c|c}\toprule
        Parameter & Value  \\\midrule
        $\Delta$ &  0.05\\
        $G$ & 10\\
        $m$ & 0.15\\
        $l$ & 0.5\\
        $b$ & 0.1\\\bottomrule
    \end{tabular}
    \caption{Parameters of the inverted pendulum task.}
    \label{tab:invpend}
\end{table}

The hyperparameters used in our experiments for learning the RSM neural network are listed in Table \ref{tab:algorithm}.

\begin{table}[]
    \centering
    \begin{tabular}{c|c}\toprule
        Parameter & Value \\\midrule
        Optimizer & Adam\cite{kingma2014adam} \\
        Learning rate & 0.0001\\
        $\lambda$ & 0.0005 \\
        $\tau$ & 0.1\\
        $\delta$ &  4.0\\
        $N$ & 20 \\\bottomrule
    \end{tabular}
    \caption{Hyperparameters used for learning the RSM candidate, i.e., the parameters to trade-off between optimizing the decrease objective and the Lipschitz bounds. Note that a larger $\tau$ is used by the learner than the verifier, to account for numerical stability and the overapprimixation introduced by the expected value computation.}
    \label{tab:algorithm}
\end{table}

\subsection{Grid refinement}
We implemented a grid refinement procedure to refine the mesh of the discretization used by the verifier.
The refinement procedure consists of two parts, a scheduled one and an on-demand refinement.
The scheduled refinement multiplies $\tau$ by $0.2$ after 4 unsuccessful learner-verifier iterations.
The on-demand refinement is activated if the decrease condition (eq.~(3) in the main paper) is violated in at least one state in the discretization, but the weaker decrease condition 
\begin{equation}\label{eq:refine}
    \mathbb{E}_{\omega\sim d}\Big[ V \Big( f(\mathbf{x}, \pi(\mathbf{x}), \omega) \Big) \Big] < V(\mathbf{x}),
\end{equation}
i.e.~eq.~(3) without the term $-\tau \cdot K$, is satisfied in all states in the discretization.
Once activated, the verifier decomposes all grid cells defined by the discretization that violate the decrease condition into smaller grid cells with a mesh of $0.1\tau$.
The verifier then starts to check all decomposed cells if they still violate the decrease condition. If such a refined cell violating the decrease condition is found, the procedure is canceled. 
If all refined cell fulfil the decrease condition, the verifier concludes that the learned neural network is an RSM.

We performed an ablation study to test whether our on-demand refinement can speed-up our algorithm, i.e., whether the additional runtime cost of the on-demand refinement is justified by requiring fewer loop iterations. The results in Table \ref{tab:stats2} shows that the effectiveness of our on-demand refinement procedure depends on the particular system and no general statement about whether it helps can be made. 

\begin{table*}
    \centering
    \begin{tabular}{c|cccc}\toprule
    Environment & On-demand refinement & Iterations & Mesh ($\tau$) & Runtime \\\midrule
        \multirow{2}{*}{2D system} & No & 4 & 0.002 & 559 \\
         & Yes & 4 & 0.01 & 3007 \\\hline
        \multirow{2}{*}{Inverted pendulum}& No & 5 & 0.002 & 2020 \\
          & Yes & 2 & 0.01 & 176 \\\bottomrule
    \end{tabular}
    \caption{Number of learner-verifier loop iterations, mesh of the discretization used by the verifier, and total algorithm runtime (in seconds) with and without on-demand cell refinement.}
    \label{tab:stats2}
\end{table*}

\end{document}